\newtheorem*{lemma}{Lemma}
\def\eqref#1{equation~\ref{#1}}
\def\1{\bm{1}}
\def\vzero{{\bm{0}}}
\def\vtheta{{\bm{\theta}}}
\def\vbeta{{\bm{\beta}}}
\def\vepsilon{{\bm{\epsilon}}}
\def\vphi{{\bm{\phi}}}
\def\vb{{\bm{b}}}
\def\vu{{\bm{u}}}
\def\vw{{\bm{w}}}
\def\vx{{\bm{x}}}
\def\vy{{\bm{y}}}
\def\vz{{\bm{z}}}
\def\mA{{\bm{A}}}
\def\mI{{\bm{I}}}
\def\mS{{\bm{S}}}
\def\mV{{\bm{V}}}
\def\mX{{\bm{X}}}
\def\mZ{{\bm{Z}}}
\def\mPhi{{\bm{\Phi}}}
\DeclareMathAlphabet{\mathsfit}{\encodingdefault}{\sfdefault}{m}{sl}
\SetMathAlphabet{\mathsfit}{bold}{\encodingdefault}{\sfdefault}{bx}{n}
\def\sE{{\mathbb{E}}}
\def\sP{{\mathbb{P}}}
\def\sR{{\mathbb{R}}}
\def\sV{{\mathbb{V}}}
\def\calA{{\mathcal{A}}}
\def\calC{{\mathcal{C}}}
\def\calG{{\mathcal{G}}}
\def\R{{\mathbb{R}}}
\def\ie{\emph{i.e., }}
\def \argmax {\mathop{\rm arg\,max}}
\newcommand{\innerproduct}[2]{\langle #1, #2 \rangle}
\newcommand{\regret}{\text{Regret}(T)}
\newcommand{\actions}{\mathcal{A}}
\newcommand{\expect}{\mathbb{E}}
\newcommand{\prob}{\mathbb{P}}
\colorlet{pink}{red!40}
\definecolor{alizarin}{RGB}{227,38,54}
\definecolor{ultramarine}{RGB}{24,13,191}
\definecolor{navyblue}{rgb}{0.0, 0.0, 0.5}
\newcommand*\circled[1]{\tikz[baseline=(char.base)]{
            \node[shape=circle,draw,inner sep=2pt] (char) {#1};}}
\newtcolorbox{hintBox}{textmarker, borderline west={6pt}{0pt}{yellow},
              colback=yellow!10!white}
\newtcolorbox{importantBox}{textmarker, borderline west={6pt}{0pt}{pink},
              colback=pink!10!white}
\newtcolorbox{noteBox}{textmarker, borderline west={6pt}{0pt}{Green3},
              colback=Green3!10!white}
\algnewcommand{\LineComment}[1]{\Statex \(\quad \ \ \textcolor{LightSteelBlue3}{\triangledown} \quad \) \textcolor{LightSteelBlue3}{$/^{\star}$ #1 $^{\star}/$}}
\algnewcommand{\NoIndLineComment}[1]{\Statex \(\textcolor{LightSteelBlue3}{\triangledown} \quad \) \textcolor{LightSteelBlue3}{$/^{\star}$ #1 $^{\star}/$}}
\newcommand{\alg}[1]{{poLinUCB}}
\newcommand{\revision}[1]{{\color{black}  #1}}
 \title{Follow-ups Also Matter: Improving Contextual Bandits    via Post-serving Contexts} 
\author{
    Chaoqi Wang$^{1}$ ~~
    Ziyu Ye$^{1}$ ~~
    Zhe Feng$^2$ ~~
    Ashwinkumar Badanidiyuru$^3$ ~~
    Haifeng Xu$^1$ \\
    University of Chicago$^1$  ~~~~~~
    Google Research$^2$  ~~~~~~
    Google$^3$ \\
      \texttt{\{chaoqi, ziyuye, haifengxu\}@uchicago.edu} \\
     \texttt{\{zhef, ashwinkumarbv\}@google.com} 
}
\begin{document}

\maketitle
\begin{abstract}
Standard contextual bandit problem assumes that all the relevant contexts are observed before the algorithm chooses an arm. This modeling paradigm, while useful, often falls short when dealing with problems in which  valuable additional context can be observed after arm selection. For example, content recommendation platforms like Youtube, Instagram, Tiktok also observe valuable follow-up information pertinent to the user's reward after recommendation (e.g., how long the user stayed, what is the user's watch speed, etc.). To improve online learning efficiency in these applications,  we  study a novel contextual bandit problem with post-serving contexts and design a new algorithm, \alg{},  that achieves tight regret under standard assumptions. Core to our technical proof is a robustified and generalized version of the well-known Elliptical Potential Lemma (EPL), which can accommodate  noise in data. Such robustification is necessary for tackling our problem, and we believe it could also be of general interest.  %
Extensive empirical tests on both synthetic and real-world datasets  demonstrate the significant benefit of utilizing post-serving contexts   as well as the superior performance of  our   algorithm over the state-of-the-art approaches.

\end{abstract}

\section{Introduction}\label{sec:intro}

Contextual bandits represent a fundamental mathematical model that is employed across a variety of applications, such as personalized recommendations~\citep{li2010contextual,wu2016contextual} and online advertising \citep{schwartz2017customer,nuara2018combinatorial}. In their conventional setup, at each round $t$, a learner observes the context $\vx_t$, selects an arm $a_t \in \calA$, and subsequently, observes its associated reward   $r_{t, a_t}$. Despite being a basic and influential   framework, it may not always capture the complexity of real-world scenarios \citep{wang2016learning,yang2020contextual}. Specifically, the learner often observes valuable follow-up information   pertinent to the payoff    post arm selection~(henceforth, the \emph{post-serving context}). Standard contextual bandits framework that neglects such post-serving contexts may result in significantly suboptimal performance due to model misspecification.

Consider an algorithm designed to recommend educational resources to a user by utilizing the user's partially completed coursework, interests, and proficiency as pre-serving context   (exemplified in platforms such as  Coursera). After completing the recommendation, the   system can refine the user's profile by incorporating many post-serving context features such as   course completion status, how much time spent on different educational resources, performances, etc. This transition naturally delineates a mapping from user attributes (i.e., the pre-serving context) to user's learning experiences and outcomes (i.e., the post-serving context).  It is not difficult to see that similar scenarios happen in many other recommender system applications. For instance, in e-commerce platforms (e.g., Amazon, Etsy or any retailing website), the system will first recommend products based on the user's profile information, purchasing pattern and browsing history, etc.; post recommendations, the system can then update these information by integrating   post-serving contexts such as this recent purchase behaviour and product reviews.  Similarly, media content recommendation platforms like Youtube, Instagram and Tiktok, also observe many  post-serving features (e.g., how long the user stayed) that can refine the system's estimation about users' interaction behavior as well as the rewards.

A common salient point in all the aforementioned scenarios  is that the post-serving context are prevalent in many recommender systems; moreover,  despite being unseen during the recommendation/serving phase, they can be estimated from the pre-serving context given enough past data. More formaly, we assume that there exists a learnable mapping $\phi^\star(\cdot): \mathbb{R}^{d_x} \rightarrow \mathbb{R}^{d_z}$ that maps pre-serving feature $\vx \in \mathbb{R}^{d_x} $ to the expectation of the post-serving feature $\vz \in \mathbb{R}^{d_z}  $, i.e.,    $\expect[\vz|\vx] = \phi^\star(\vx)$. 

Unsurprisingly, and as we will also show, integrating the estimation of such post-serving features can significantly help to enhance the performance of contextual bandits. However, most of the existing contextual bandit algorithms, {e.g.,}~\citep{auer2002using, li2010contextual, chu2011contextual, agarwal2014taming, tewari2017ads}, are not designed to accommodate the situations with post-serving contexts. We observe that directly applying these algorithms by ignoring post-serving contexts may lead to linear regret, whereas simple modification of these algorithms will also be sub-optimal. To address these shortcomings, this work introduces a novel algorithm, \alg{}. Our algorithm leverages   historical data to simultaneously estimate reward parameters and the functional mapping from the pre- to post-serving contexts  so to optimize arm selection and achieves sublinear regret. En route to analyzing our algorithm, we also developed new technique tools that may  be of independent interest.

\paragraph{Main Contributions.} %
\begin{itemize}%
    \vspace{-0.1cm}\item First, we introduce a new family of contextual linear bandit problems. In this framework, the decision-making process can effectively integrate post-serving contexts, premised on the assumption that the expectation of   post-serving context as a function of  the pre-serving context  can be gradually learned from historical data. This new model allows us to develop more effective learning algorithms in many natural applications with post-serving contexts. 
    \item Second, to study this new model, we developed a robustified and generalized version of the well-regarded elliptical potential lemma (EPL) in order to accommodate random noise  in the post-serving contexts. While this generalized EPL is an instrumental tool in our algorithmic study, we believe it  is also of independent interest due to the broad applicability of EPL in online learning.
    \item Third, building upon the generalized EPL, we design a new algorithm  \alg{} and prove that it enjoys a regret bound  $ \widetilde{\mathcal{O}}(T^{1-\alpha}d_u^{\alpha} + d_u\sqrt{T K })$, where  $T$ denotes the time horizon and  $\alpha \in [0, 1/2]$ is the learning speed of the pre- to post-context mapping function  $\phi^\star(\cdot)$, { whereas  $d_u = d_x+d_z$ and $K$ denote the parameter dimension and number of arms. When $\phi^\star(\cdot)$ is easy to learn, e.g., $\alpha = 1/2$,  the regret bound becomes $ \widetilde{\mathcal{O}}(\sqrt{Td_u} + d_u\sqrt{T K })$ and is tight. For general functions $\phi^\star(\cdot)$  that satisfy $\alpha \leq 1/2$, this regret bound degrades gracefully as the function  becomes more difficult to learn, i.e., as $\alpha$ decreases.     } 
    
    \item Lastly, we empirically validate our proposed algorithm through thorough numerical experiments on both simulated benchmarks and real-world datasets. The results  demonstrate that our algorithm surpasses existing state-of-the-art solutions. Furthermore, they highlight the tangible benefits of incorporating the functional relationship between pre- and post-serving contexts into the model, thereby affirming the effectiveness of our modeling.
\end{itemize}

\section{Related Works}

\paragraph{Contextual bandits. } 
\revision{
The literature on  linear (contextual) bandits is extensive, with a rich body of works~\citep{abe2003reinforcement,auer2002using,dani2008stochastic,rusmevichientong2010linearly,lu2010contextual, filippi2010parametric,li2010contextual,chu2011contextual,abbasi2011improved, li2017provably, jun2017scalable}. One of the leading design paradigm is to  employ upper confidence bounds as a means of balancing exploration and exploitation, leading to the attainment of minimax optimal regret bounds. The derivation of these regret bounds principally hinges on the utilization of confidence ellipsoids and the elliptical potential lemma. Almost all these works assume that the contextual information governing the payoff is fully observable. In contrast, our work focuses on scenarios where the context is not completely observable during arm selection, thereby presenting new challenges in addressing  partially available information.
\vspace{-0.25cm}
\paragraph{Contextual bandits with partial information. } Contextual bandits with partial information has been relatively limited in the literature. Initial progress in this area was made by \citet{wang2016learning}, who studied settings with hidden contexts. In their setup there is   some context (the post-serving context in our model) that can never by observed by the learner, whereas in our setup the learner can observe post-serving context but only after pulling the arm.  Under the  assumption that if the parameter initialization is extremely close to the true optimal parameter, then they develop  a sub-linear regret algorithm. Our algorithm does not need such strong assumption on parameter initialization. Moreover, we    show that their approach may perform poorly in our setup. %
Subsequent research by \citet{qi2018bandit, yang2020contextual, park2021analysis, yang2021robust, zhu2022robust} investigated scenarios with noisy or unobservable contexts. In these studies, the learning algorithm was designed to predict context information online through context history analysis, or selectively request context data from an external expert. Our work, on the other hand, introduces a novel problem setting that separates contexts into pre-serving and post-serving categories, enabling the exploration of a wide range of problems with varying learnability. Additionally, we also need to employ new techniques for analyzing our problem to get a near-optimal regret bound.
\vspace{-0.25cm}
\paragraph{Generalizations of the elliptical potential lemma~(EPL). } The EPL, introduced in the seminal work of\citet{lai1982least}, is arguably a cornerstone in analyzing how fast stochastic uncertainty decreases with the observations of  new sampled directions. Initially being employed in the analysis of stochastic linear regression, the EPL has since been extensively utilized in stochastic linear bandit problems~\citep{auer2002using,dani2008stochastic,chu2011contextual,abbasi2011improved,li2019nearly,zhou2020neural,wang2022linear}. Researchers have also proposed various generalizations of the EPL to accommodate diverse assumptions and problems. For example, \cite{carpentier2020elliptical}  extended the EPL by allowing for the use of the $ \mX_t^{-p}$-norm, as opposed to the traditional $ \mX_t^{-1}$-norm. Meanwhile, \cite{hamidi2022elliptical} investigated a generalized form of the $1 \wedge\left\|\varphi(\vx_t)\right\|_{\mX_{t-1}^{-1}}^2$ term, which was inspired by the pursuit of variance reduction in non-Gaussian linear regression models. However, existing (generalized) EPLs are inadequate for the analysis of our new problem setup. Towards that end, we develop a new generalization of the EPL in this work to accommodate \emph{noisy feature vectors}. %
}

\if 0
Our work lies in the extensive linear contextual bandits literature, however, most of existing studies assume the full contexts are observable before playing actions%
~\citep{abe2003reinforcement,auer2002using,dani2008stochastic,rusmevichientong2010linearly,lu2010contextual, filippi2010parametric,li2010contextual,chu2011contextual,abbasi2011improved, li2017provably, jun2017scalable} and they demonstrate the use of upper confidence bounds for balancing exploration and exploitation, proving minimax optimal regret bounds using the confidence ellipsoids and the elliptical potential lemma (EPL). In sharp contrast, we assume partially observable context in this paper, where only partial contexts are observed before making decisions. Partial information contextual bandits, though limited, have been studied~\citep{wang2016learning,qi2018bandit, yang2020contextual, park2021analysis, yang2021robust, zhu2022robust}, focusing on predicting context information through context history analysis or selective expert requests. Differently, our work introduces a novel problem setting, separating contexts into pre-serving and post-serving categories. To achieve near-optimal regret bound in this setting, we propose a generalized elliptical potential lemma to handle the additional noise introduced by the post-serving features. While the EPL~\citep{lai1982least} and its generalizations have been widely used in stochastic linear bandit problems~\citep{auer2002using,dani2008stochastic,chu2011contextual,abbasi2011improved,li2019nearly,zhou2020neural,wang2022linear,carpentier2020elliptical,hamidi2022elliptical}, they fall short for the analysis in our context. See Appendix~\ref{sec:extended-related-works} for further related works. 
\fi

\section{Linear Bandits with Post-Serving Contexts}\label{sec:prob}

\paragraph{Basic setup.} We hereby delineate a basic setup of linear contextual bandits within the scope of the partial information setting, {whereas multiple generalizations of our framework can be found in Section \ref{sec:extensions}}.  This setting involves a finite and discrete action space, represented as $\calA = [K]$. Departing from the classic contextual bandit setup, the context in our model is bifurcated into two distinct components: the \emph{pre-serving} context,  denoted as $\vx \in \mathbb{R}^{d_x}$, and the \emph{post-serving} context, signified as $\vz \in \mathbb{R}^{d_z}$.  {When it is clear from context, we sometimes refer to pre-serving  context simply as \emph{context} as in classic setup, but always retain the post-serving context notion to emphasize its difference.}  We will denote   $\mX_{t} = \sum_{s=1}^t \vx_s\vx_s^\top + \lambda \mI$ and $\mZ_t = \sum_{s=1}^t \vz_s \vz_s^\top + \lambda \mI$. For the sake of brevity, we employ $\vu = (\vx, \vz)$ to symbolize the stacked vector of $\vx$ and $\vz$, with $d_u = d_x+d_z$ and $\|\vu\|_2\leq L_u$.
The pre-serving context is available during arm selection, while the post-serving context is disclosed \emph{post} the arm selection.
For each arm $a \in \calA$, the payoff, $r_{a}(\vx, \vz)$, is delineated as follows:
\begin{align*}
r_{a}(\vx, \vz) = \vx^\top \vtheta_a^\star + \vz^\top \vbeta_a^\star + \eta,
\end{align*}
where $\vtheta_a^\star$ and $\vbeta_a^\star$ represent the parameters associated with the arm, unknown to the learner, whereas $\eta$ is a random noise sampled from an $R_{\eta}$-sub-Gaussian distribution. We use $\|\vx\|_p$ to denote the $p$-norm of a vector $\vx$, and $\|\vx\|_{\mA}\coloneqq \sqrt{\vx^\top\mA\vx}$ is the matrix norm. For convenience, we assume $\|\vtheta_a^\star\|_2\leq 1$ and $\|\vbeta_{a}^\star\|_2\leq 1$ for all $a \in \mathcal{A}$. Additionally, we posit that the norm of the pre-serving and post-serving contexts satisfies $\|\vx\|_2\leq L_x$ and $\|\vz\|_2 \leq L_z$, respectively, and $\max_{t\in [T]} \sup_{a, b \in \calA} \langle \vtheta_a^{\star} - \vtheta_b^{\star}, \vx_t \rangle \leq 1$ and $\max_{t\in [T]} \sup_{a, b \in \calA}\langle \vbeta_a^{\star} - \vbeta_b^{\star}, \vz_t \rangle \leq 1$, same as in \citep{lattimore2020bandit}.

\subsection{Problem Settings and Assumptions.} The learning process proceeds as follows at each time step $t = 1, 2, \cdots, T$:\vspace{-5pt}
\begin{enumerate}[itemsep=1pt, topsep=3pt, partopsep=0pt, parsep=0pt, leftmargin=*, labelindent=25pt, labelwidth=1em, labelsep=0.5em]
\item The learner observes the context $\vx_t$.
\item An arm $a_t \in [K] $ is selected by the learner.
\item The learner observes the realized reward $r_{t,a_t}$ and  the post-serving context, $\vz_t$.
\end{enumerate}
 Without incorporating the post-serving context, one may incur linear regret as a result of model misspecification, as illustrated in the following observation. To see this, consider a setup with two arms, $a_1$ and $a_2$, and a context $x\in \mathbb{R}$ drawn uniformly from the set $\{-3, -1, 1\}$ with $\phi^\star(x)=x^2$. The reward functions for the arms are noiseless and determined as $r_{a_1}(x) = x + {x^2}/2$ and $r_{a_2}(x) = -x - {x^2}/2$. It can be observed that $r_{a_1}(x) > r_{a_2}(x) \text{ when } x \in \{-3, 1\} \text{ and } r_{a_1}(x) < r_{a_2}(x) \text{ when } x = -1$. Any linear bandit algorithm that solely dependent on the context $x$~(ignoring $\phi^\star(x)$) will inevitably suffer from linear regret, since it is impossible to have a linear function~(i.e., $r(x)=\theta x$) that satisfies the above two   inequalities simultaneously.

\begin{restatable}[]{obs}{propModelStrength} 
There exists linear bandit environments in which  any online algorithm without using post-serving context information will have $\Omega(T)$ regret.  
\end{restatable}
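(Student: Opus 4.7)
The plan is to formalize the example sketched in the paragraph preceding the observation. Take two arms $\mathcal{A} = \{a_1, a_2\}$, let $x_t$ be drawn i.i.d.\ uniformly from $\mathcal{X} = \{-3,-1,1\}$, and set $\phi^\star(x) = x^2$ with zero noise. With $\theta^\star_1 = 1, \beta^\star_1 = 1/2, \theta^\star_2 = -1, \beta^\star_2 = -1/2$, the conditional expected rewards are $\mathbb{E}[r_{a_1}|x] = x + x^2/2$ and $\mathbb{E}[r_{a_2}|x] = -x - x^2/2$. A direct tabulation gives $a^\star(-3) = a_1, a^\star(-1) = a_2, a^\star(1) = a_1$, and the instantaneous suboptimality gaps are $\Delta(\pm 3) = 3$ and $\Delta(-1) = 1$. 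This instance fits the bandit model of Section~3 and satisfies all stated norm bounds.

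Next I would argue that no linear-in-$x$ reward model can represent $a^\star$. Any such model $\hat r_a(x) = \hat\theta_a x$ induces a greedy decision rule whose output is determined entirely by $\mathrm{sign}((\hat\theta_1 - \hat\theta_2)\,x)$. An enumeration of the three cases shows: if $\hat\theta_1 > \hat\theta_2$ then the greedy rule picks $a_2$ at $x=-3$, contradicting $a^\star(-3) = a_1$; if $\hat\theta_1 < \hat\theta_2$ then it picks $a_2$ at $x=1$, contradicting $a^\star(1) = a_1$; and if $\hat\theta_1 = \hat\theta_2$ the rule is uninformative at every context. Consequently any deterministic greedy policy derived from a linear-in-$x$ estimator disagrees with $a^\star$ on at least one value in $\mathcal{X}$, giving per-round expected regret at least $\tfrac{1}{3}\cdot 1 = \tfrac{1}{3}$, which already yields an $\Omega(T)$ bound for greedy methods.

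To extend this to an arbitrary online algorithm that maintains only a linear-in-$x$ reward model (i.e.\ treats the task as a standard linear contextual bandit on $\vx$, ignoring $\vz$), I would invoke the irreducible misspecification: the true conditional expectation $\mathbb{E}[r_a|x] = x\theta^\star_a + x^2\beta^\star_a$ is quadratic in $x$, while the algorithm's hypothesis class is linear. Hence for any data collected using only $(x_s, a_s, r_{s,a_s})$ tuples and any (possibly randomized) action distribution $p_t(\cdot\mid x)$ produced from a linear score, there is at least one context $x\in\mathcal{X}$ where the mass on $a^\star(x)$ remains bounded away from $1$ by a constant independent of $t$, and I would convert this into the bound $\mathbb{E}[\mathrm{Regret}(T)] \ge cT$ for an explicit constant $c>0$.

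The main obstacle is making the last step watertight for genuinely adaptive algorithms: a naive argument rules out only the greedy limit. I plan to handle this by a standard two-environment coupling, constructing a second instance that agrees with the first on the marginals of $(x, r_{a})$ observable through any linear-in-$x$ model but in which the optimal arm is flipped at one context value; a sublinear-regret algorithm on both instances would force its induced observation distributions to separate faster than Pinsker's inequality allows given the matched linear fits, yielding a contradiction. An alternative, more elementary route is to show that the algorithm's empirical least-squares solution concentrates around the unique population minimiser $(\theta^\circ_1, \theta^\circ_2)$ of the misspecified regression, and then apply the enumeration in the previous paragraph to $(\theta^\circ_1, \theta^\circ_2)$ directly.
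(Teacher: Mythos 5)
Your construction is exactly the paper's: the same two--arm instance with $x$ uniform on $\{-3,-1,1\}$, $\phi^\star(x)=x^2$, and rewards $\pm(x+x^2/2)$, together with the observation that no single linear score $\hat\theta x$ can reproduce the optimal decision rule on all three contexts. The paper offers nothing beyond that informal paragraph --- there is no appendix proof of this observation --- so your first two paragraphs already match and slightly exceed its level of rigor (you make the gaps $\Delta(\pm3)=3$, $\Delta(-1)=1$ explicit and quantify the per-round regret of any greedy-from-linear policy). Where you diverge is in taking the quantifier ``any online algorithm'' seriously: you correctly note that the sign-enumeration only rules out policies that act greedily on some linear-in-$x$ model, and you propose either a two-environment coupling or a concentration argument for the misspecified population least-squares minimiser to cover adaptive algorithms. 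That concern is well founded --- read literally, the statement is false on this instance, since the context set is finite and an algorithm that simply tabulates empirical mean rewards per context value attains $O(\sqrt{T})$ regret without ever seeing $\vz$. The claim is only true when restricted to algorithms whose arm choice is driven by a reward model linear in $\vx$ (which is how the paper phrases it in the surrounding text), and for that class your ``more elementary route'' --- concentration of the empirical least-squares fit around the unique population minimiser of the misspecified regression, followed by your case enumeration applied to that minimiser --- is the right way to make the argument watertight; the Pinsker-style coupling is unnecessary here. So: same approach as the paper on the substance, with a legitimate and worthwhile tightening of the quantifier that the paper glosses over.
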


Therefore, it is imperative that an effective learning algorithm must leverage the post-serving context, denoted as $\vz_t$. As one might anticipate, in the absence of any relationship between $\vz_t$ and $\vx_t$, it would be unfeasible to extrapolate any information regarding $\vz_t$ while deciding which arm to pull, a point at which only $\vx_t$ is known. Consequently, it is reasonable to hypothesize a correlation between $\vz_t$ and $\vx_t$. This relationship is codified in the subsequent learnability assumption.

Specifically, we make the following natural assumption  --- there exists an algorithm that can learn the mean of the post-serving context $\vz_t$, conditioned on  $\vx_t$. Our analysis will be general enough to accommodate different convergence rates of the learning algorithm, as one would naturally expect, the corresponding regret will  degrade as this learning algorithm's convergence rate becomes worse.     %
More specifically, we posit that, given the context $\vx_t$, the post-serving context $\vz_t$ is generated as\footnote{We remark that an alternative   view of this reward generation is to treat it as a function of $\vx$ as follows: $r_a(\vx) = \vx^\top \vtheta_a^\star +  \phi ^{\star}(\vx)^\top \vbeta_a^\star + \text{noise}$. Our algorithm can be viewed as a two-phase learning of this structured function, i.e., using $(\vx_t, \vz_t)$ to learn $\phi^{\star}$ that is shared among all arms and then using learned $\hat{\phi}$ to estimate each arm $a$'s reward parameters. This more effectively utilizes data than approaches that directly learns the $r_a(\vx)$ function, as shown in our later experimental section (e.g., see Figure \ref{fig:synthetic-experiments}).} 
\begin{align*}
 \text{post-serving context generation process: } \quad    \vz_t = \phi ^{\star}(\vx_t) + \vepsilon_t, \, \, \ie  \phi^\star(\vx) = \expect[\vz|\vx]. \end{align*} 
 Here, $\vepsilon_t$ is a zero-mean noise vector in $\sR^{d_z}$, and 
$\phi ^{\star}: \mathbb{R}^{d_x} \rightarrow \mathbb{R}^{d_z}$ can be viewed as the post-serving context generating function, which is unknown to the learner. However, we assume $\phi ^{\star}$ is learnable in the following sense. 

\begin{restatable}[Generalized learnability of $\phi^*$]{assump}{learnabilityOnPhiAdvNew}\label{assump:learnabilityOnPhiAdvNew}
There exists an algorithm that, given $t$ pairs of examples $\{(\vx_s, \vz_s)\}_{s=1}^t$ with  arbitrarily chosen $\vx_s$'s,   outputs an estimated function of $\phi^\star: \mathbb{R}^{d_x} \rightarrow \mathbb{R}^{d_z}$ such that for any $\vx\in \mathbb{R}^{d_x}$,  the following holds with probability at least $1-\delta$, 
\begin{align*}
     e_t^\delta \coloneqq \left\|\widehat{\phi}_t(\vx) - \phi^\star(\vx)\right\|_2 \leq C_0 \cdot \left(\|\vx\|_{{\mX^{-1}_{t}}}^2\right)^\alpha \cdot \log\left({t}/{\delta}\right) ,
\end{align*}
where $\alpha \in (0, 1/2]$ and $C_0$ is some universal constant. 
\end{restatable}
The aforementioned assumption encompasses a wide range of learning scenarios, each with different rates of convergence. Generally, the value of $\alpha$ is directly proportional to the speed of learning; the larger the value of $\alpha$, the quicker the learning rate. Later, we will demonstrate that the regret of our algorithm is proportional to $O(T^{1-\alpha})$, exhibiting a graceful degradation as $\alpha$ decreases. The ensuing proposition demonstrates that for linear functions, $\alpha=1/2$. This represents the best learning rate that can be accommodated\footnote{To be precise, our analysis can extend to instances where $\alpha > 1/2$. However, this would not enhance the regret bound since the regret is already $\Omega(\sqrt{T})$, even with knowledge of $\vz$. This would only further complicate our notation, and therefore, such situations are not explicitly examined in this paper.}. In this scenario, the regret of our algorithm is $O(\sqrt{T})$, aligning with the situation devoid of post-serving contexts \citep{li2010contextual,abbasi2011improved}.
\begin{restatable}[]{obs}{propJustificationLinearAssump} 
Suppose $\phi(\cdot)$ is a linear function, i.e., $\phi(\vx) = \mPhi^\top \vx $ for some $\mPhi \in \mathbb{R}^{d_x \times d_z}$, then $e_t^\delta = \mathcal{O}\left(  \|\vx\|_{\mX^{-1}_{t}} \cdot \log\left({t}/{\delta}\right) \right)$.
\end{restatable}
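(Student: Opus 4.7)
The plan is to prove this by analyzing the ridge-regression estimator of $\mPhi$ column by column and invoking the standard self-normalized concentration inequality for linear regression.

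First, I would define the estimator. For each coordinate $j\in[d_z]$, the model gives $z_{s,j} = \vx_s^\top \mPhi_{:,j} + \epsilon_{s,j}$, where $\epsilon_{s,j}$ is the $j$-th entry of the zero-mean (sub-Gaussian) noise $\vepsilon_s$. The natural estimator is the column-wise ridge regression
\begin{equation*}
\widehat{\mPhi}_{:,j} \;=\; \mX_t^{-1}\sum_{s=1}^{t}\vx_s\, z_{s,j},\qquad \widehat{\phi}_t(\vx)\;=\;\widehat{\mPhi}^{\top}\vx .
\end{equation*}
This is exactly the ridge regression setup of Abbasi-Yadkori et al., applied to arbitrary (possibly adaptively chosen) $\vx_s$.

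Next I would invoke the self-normalized tail bound for linear regression: with probability at least $1-\delta/d_z$, simultaneously for all $t\geq 1$,
\begin{equation*}
\bigl\|\widehat{\mPhi}_{:,j} - \mPhi_{:,j}\bigr\|_{\mX_t} \;\leq\; C_1\sqrt{\log(t/\delta)},
\end{equation*}
for a constant $C_1$ depending on $R_\eta$, $L_x$, $d_x$, $\lambda$ (the dimension factor absorbs into the logarithm inside the standard bound). Cauchy–Schwarz in the $\mX_t$-inner product then gives, for any $\vx$,
\begin{equation*}
\bigl|\widehat{\phi}_t(\vx)_j - \phi^{\star}(\vx)_j\bigr| \;=\; \bigl|\vx^\top(\widehat{\mPhi}_{:,j}-\mPhi_{:,j})\bigr| \;\leq\; \|\vx\|_{\mX_t^{-1}}\cdot\bigl\|\widehat{\mPhi}_{:,j}-\mPhi_{:,j}\bigr\|_{\mX_t}.
\end{equation*}

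Finally, I would take a union bound over the $d_z$ coordinates and combine the coordinate-wise bounds into the $\ell_2$ norm:
\begin{equation*}
\bigl\|\widehat{\phi}_t(\vx)-\phi^{\star}(\vx)\bigr\|_2 \;\leq\; \sqrt{d_z}\cdot \|\vx\|_{\mX_t^{-1}}\cdot C_1\sqrt{\log(t/\delta)} \;=\; \mathcal{O}\!\bigl(\|\vx\|_{\mX_t^{-1}}\cdot \log(t/\delta)\bigr),
\end{equation*}
which matches Assumption~\ref{assump:learnabilityOnPhiAdvNew} with $\alpha=1/2$ (absorbing $\sqrt{d_z}$ and $C_1$ into $C_0$, and upper-bounding $\sqrt{\log(t/\delta)}$ by $\log(t/\delta)$ for $t/\delta$ large enough).

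There is no real obstacle — the only point requiring care is confirming that the self-normalized bound applies here, since the covariates $\vx_s$ may be chosen adaptively by the bandit algorithm; this is precisely the setting covered by the Abbasi-Yadkori et al. result, so it goes through unchanged. Everything else is routine: Cauchy–Schwarz in the $\mX_t$-norm and a union bound over the $d_z$ coordinates.
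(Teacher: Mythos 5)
Your proposal is correct and follows essentially the same route as the paper: the paper's one-line argument bounds $\|\widehat{\mPhi}_t^\top\vx - \mPhi^{\star\top}\vx\| \le \|\widehat{\mPhi}_t - \mPhi^\star\|_{\mX_t}\cdot\|\vx\|_{\mX_t^{-1}}$ and invokes the confidence ellipsoid of \citet{abbasi2011improved}, which is exactly your Cauchy--Schwarz-in-the-$\mX_t$-inner-product step plus the self-normalized bound. Your version merely makes explicit the column-wise decomposition, the union bound over the $d_z$ coordinates, and the resulting $\sqrt{d_z}$ factor that the paper absorbs into its matrix-norm notation and the $\mathcal{O}(\cdot)$.
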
 
\vspace{-0.3cm}
This observation   follows from the following inequalities  
$ 
   \left\| \phi_t(\vx) - \phi^\star(\vx)\right\|  =   \| \widehat{\mPhi}_t^\top \vx - {\mPhi^\star}^\top \vx  \|  \leq  \| \widehat{\mPhi}_t - {\mPhi^\star} \|_{\mX_{t}} \cdot \left \| \vx \right\|_{\mX^{-1}_{t}}  
    = \mathcal{O} ( \left \| \vx \right\|_{\mX^{-1}_{t}}  \cdot \log\left(\frac{t}{\delta}\right) )$,
where the last equation is due to the confidence ellipsoid bound \citep{abbasi2011improved}. \revision{We refer curious readers to Appendix~\ref{app:extra-disucssions} for a discussion on other more    challenging $\phi(\cdot)$ functions with possibly worse learning rates $\alpha$.}

\subsection{Warm-up: Why Natural Attempts May Be Inadequate?}\label{sec:preliminary-analysis}
{Given the learnability assumption of $\phi^\star$, one natural idea for solving the above problem is to estimate $\phi^\star$, and then run the standard LinUCB algorithm to estimate $(\vtheta_a, \vbeta_a)$ together by treating  $(\vx_t, \widehat{\phi}_t(\vx_t))$ as the true contexts. Indeed, this is the approach adopted by \citet{wang2016learning} for addressing a similar problem of missing contexts $\vz_t$, except that they used a different unsurprised-learning-based approach to estimate the context $\vz_t$ due to not being able to observing any data about $\vz_t$.} 
Given the estimation of $\widehat{\phi}$,  their algorithm --- which we term it as \emph{LinUCB~}($\widehat{\phi}$) --- iteratively carries out the steps  below at each iteration $t$ (see  Algorithm \ref{alg:ucb-phi-hat} for additional details): 1) Estimation of the context-generating function $\widehat{\phi}_t(\cdot)$ from historical data;  
2) Solve of the following regularized least square problem for each arm $a \in \calA$, with regularization coefficient   $\lambda \geq 0$:
    \begin{align}
    \ell_t (\vtheta_a, \vbeta_a) = \sum_{s\in [t]: a_s = a}  \left(r_{s, a} - \vx_t^\top \vtheta_{a} - \textcolor{red}{\widehat{\phi}_{s}(\vx_s)}^\top \vbeta_{a}\right)^2+\lambda \left( \|\vtheta_a\|_2^2 +  \|\vbeta_a\|_2^2 \right),\label{eq:hatphi}
\end{align}
Under the  assumption that the initialized parameters in their estimations are very close to the global optimum, \citet{wang2016learning} were able to show the $O(\sqrt{T})$ regret of this algorithm. However, it turns out that this algorithm  will fail to yield an satisfying regret bound without their strong assumption on very close parameter initialization, because the errors arising from $\widehat{\phi}(\cdot)$ will significantly enlarge the confidence set of $\widehat{\vtheta}_a$ and $\widehat{\vbeta}_a$.\footnote{Specifically,    \citet{wang2016learning} assume that the initial estimation of {\small $\widehat{\phi}(\cdot)$} is already very close to {\small $\widehat{\phi}^*(\cdot)$}   such that the error arising from {\small $\widehat{\phi}(\cdot)$} diminishes exponentially fast due to the local convergence property of alternating least squares algorithm~\citep{uschmajew2012local}. This strong assumption avoids significant expansion of the confidence sets, but is less realistic in applications so we do not impose such assumption.}  
Thus after removing their initialization assumption,    the best possible regret bound we can possibly achieve is of   order  $\widetilde{\mathcal{O}}({T^{3/4}})$, as illustrated in the subsequent proposition.

\begin{restatable}[Regret of LinUCB-($\widehat{\phi}$)]{prop}{thmRegretPLINUCBPhi}\label{thm:regret-pLinUCBPhi}
The regret of LinUCB-($\widehat{\phi}$) in Algorithm~\ref{alg:ucb-phi-hat} is upper bounded by $\widetilde{\mathcal{O}}\left(  T^{1-\alpha}d_u^\alpha + T^{1-\alpha/2}\sqrt{K d_u^{1+\alpha}} \right)$ with probability at least $1-\delta$, by carefully setting the regularization coefficient $\lambda=\Theta(L_ud_u^\alpha T^{1-\alpha} \log\left({T}/{\delta}\right))$ in Equation~\ref{eq:hatphi}.
\end{restatable}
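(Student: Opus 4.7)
The plan is to view Algorithm~\ref{alg:ucb-phi-hat} as a LinUCB run on the augmented estimated feature $\widehat{\vu}_s=(\vx_s,\widehat{\phi}_s(\vx_s))$ with unknown parameter $\vw_a^\star=(\vtheta_a^\star,\vbeta_a^\star)$, and to track how the feature-estimation error $\vu_s-\widehat{\vu}_s$ (bounded in $\ell_2$ by $e_s^\delta$ via Assumption~\ref{assump:learnabilityOnPhiAdvNew}) propagates into both (i) the ridge confidence ellipsoid of $(\widehat{\vtheta}_a,\widehat{\vbeta}_a)$ and (ii) the per-step regret. The observed reward decomposes as
$$r_{s,a_s}=\widehat{\vu}_s^{\top}\vw_{a_s}^\star+\zeta_s+\eta_s,\qquad \zeta_s:=(\vu_s-\widehat{\vu}_s)^{\top}\vw_{a_s}^\star,\ |\zeta_s|\le 2e_s^\delta,$$
so the noise driving the regression splits into the clean sub-Gaussian $\eta_s$ and a bounded, non-centered, data-dependent bias $\zeta_s$.

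\textbf{Confidence ellipsoid.} For each arm $a$ with per-arm Gram $\widehat{\mU}_{t,a}=\sum_{s\le t,\,a_s=a}\widehat{\vu}_s\widehat{\vu}_s^{\top}+\lambda\mI$, the ridge solution satisfies the usual decomposition
$$\|\widehat{\vw}_a-\vw_a^\star\|_{\widehat{\mU}_{t,a}}\le\Bigl\|\textstyle\sum_s\widehat{\vu}_s\eta_s\Bigr\|_{\widehat{\mU}_{t,a}^{-1}}+\Bigl\|\textstyle\sum_s\widehat{\vu}_s\zeta_s\Bigr\|_{\widehat{\mU}_{t,a}^{-1}}+\sqrt{\lambda}\,\|\vw_a^\star\|_2.$$
The first term is $\widetilde{O}(R_\eta\sqrt{d_u})$ by the Abbasi-Yadkori self-normalized martingale bound; the third is $O(\sqrt{\lambda})$. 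For the middle ``feature-error bias'', I would use the trivial bound $\|\widehat{\vu}_s\|_{\widehat{\mU}_{t,a}^{-1}}\le L_u/\sqrt{\lambda}$ and Assumption~\ref{assump:learnabilityOnPhiAdvNew} together with a Jensen/H\"older step that converts the $\alpha$-power sum back to the standard EPL quantity:
$$\sum_s e_s^\delta\lesssim\log(T/\delta)\sum_s(\|\vx_s\|_{\mX_{s-1}^{-1}}^2)^\alpha\le\log(T/\delta)\cdot T^{1-\alpha}\Bigl(\sum_s\|\vx_s\|_{\mX_{s-1}^{-1}}^2\Bigr)^\alpha\lesssim T^{1-\alpha}d_u^\alpha\log^{1+\alpha}(T/\delta),$$
using the elliptical potential lemma $\sum_s\|\vx_s\|_{\mX_{s-1}^{-1}}^2\lesssim d_u\log T$. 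This yields a feature-error bias of order $L_u T^{1-\alpha}d_u^\alpha\log(T/\delta)/\sqrt{\lambda}$, and choosing $\lambda=\Theta(L_u d_u^\alpha T^{1-\alpha}\log(T/\delta))$ exactly balances it against $\sqrt{\lambda}\|\vw_a^\star\|$, giving confidence radius $\gamma_T=\widetilde{O}(\sqrt{L_u d_u^\alpha T^{1-\alpha}})$.

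\textbf{Regret decomposition and wrap-up.} Using optimism with $\widehat{\vu}_t$, the instantaneous regret at round $t$ splits as
$$\langle\vw_{a_t^\star}^\star-\vw_{a_t}^\star,\vu_t\rangle\le 2\gamma_{t,a_t}\|\widehat{\vu}_t\|_{\widehat{\mU}_{t-1,a_t}^{-1}}+\|\vw_{a_t^\star}^\star-\vw_{a_t}^\star\|_2\cdot\|\vu_t-\widehat{\vu}_t\|_2,$$
where the second ``feature-mismatch'' piece is bounded by $O(e_t^\delta)$. Summing the mismatch across $t$ and reusing the Jensen/EPL calculation above produces the first regret term $\widetilde{O}(T^{1-\alpha}d_u^\alpha)$. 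For the optimism piece, Cauchy--Schwarz and the standard EPL applied per arm give $\sum_t\|\widehat{\vu}_t\|_{\widehat{\mU}_{t-1,a_t}^{-1}}\lesssim\sqrt{TKd_u\log T}$; multiplying by $\gamma_T$ yields the second term $\widetilde{O}(T^{1-\alpha/2}\sqrt{Kd_u^{1+\alpha}})$, completing the proposition.

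\textbf{Main obstacle.} The crux is that $\zeta_s$ is neither centered nor sub-Gaussian, so it cannot be absorbed into the self-normalized martingale inequality; the only clean way to control its accumulated contribution is to inflate the regularization $\lambda$ until the regularization bias dominates. This inflation is precisely what prevents the bound from reaching the $\sqrt{T}$ rate: it forces the confidence radius to scale like $\sqrt{\lambda}\asymp T^{(1-\alpha)/2}$ rather than a constant, which is exactly what produces the $T^{1-\alpha/2}$ growth in the dominant term. This explains the suboptimality of the natural approach and motivates the paper's poLinUCB together with the generalized EPL, which handles the noisy direction $\widehat{\vu}_s$ directly without the $\lambda$-inflation.
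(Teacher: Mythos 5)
Your route is essentially the paper's: the same four-way decomposition of the ridge estimator error (self-normalized martingale term, feature-error bias bounded crudely by $L_u/\sqrt{\lambda}$ times the summed estimation error, and the $\sqrt{\lambda}$ regularization term), the same Jensen/concavity step converting $\sum_s (\|\vx_s\|^2_{\mX_{s-1}^{-1}})^\alpha$ into $T^{1-\alpha}(\log\det)^\alpha$, the same balancing choice of $\lambda$, and the same split of the instantaneous regret into a feature-mismatch piece and an optimism piece. Your closing diagnosis of why the $\lambda$-inflation forces $T^{1-\alpha/2}$ also matches the paper's narrative.

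There is, however, one concrete slip that would break a step as written. You define $\zeta_s = (\vu_s - \widehat{\vu}_s)^\top \vw_{a_s}^\star$ and claim $|\zeta_s| \le 2 e_s^\delta$. But $\vu_s - \widehat{\vu}_s = (\mathbf{0},\, \vz_s - \widehat{\phi}_s(\vx_s))$ and $\vz_s = \phi^\star(\vx_s) + \vepsilon_s$, so $\zeta_s = (\phi^\star(\vx_s) - \widehat{\phi}_s(\vx_s))^\top\vbeta_{a_s}^\star + \vepsilon_s^\top\vbeta_{a_s}^\star$; Assumption~\ref{assump:learnabilityOnPhiAdvNew} controls only the first summand, so the correct bound is $|\zeta_s| \le e_s^\delta + L_\epsilon$, not $2e_s^\delta$. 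This matters: if the $\vepsilon_s^\top\vbeta^\star$ component is routed through your crude bias bound $\frac{L_u}{\sqrt{\lambda}}\sum_s|\zeta_s|$, it contributes $\Theta(T L_\epsilon L_u/\sqrt{\lambda})$ to the confidence radius, which cannot be balanced against $\sqrt{\lambda}$ without taking $\lambda = \Theta(T)$ and hence a linear regret. The paper avoids this by splitting that component off as a separate \emph{centered, bounded (hence sub-Gaussian)} martingale term and absorbing it into the self-normalized inequality alongside $\eta_s$ (its term $\circled{1}$, yielding the $\sqrt{2(L_\epsilon^2 + R_\eta^2)\log(\cdots)}$ factor in $\zeta_{t,a}$), leaving only $\sum_s e_s^\delta$ in the crude bias bound. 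Your subsequent computations implicitly assume this split (you only sum $e_s^\delta$), so the fix is purely local, but as stated your "Main obstacle" paragraph mischaracterizes $\zeta_s$ as entirely non-centered and non-sub-Gaussian when its problematic part is only the estimation-error component.
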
 %

Since $\alpha \in [0, 1/2]$, the best possible regret upper bound above is   $\widetilde{\mathcal{O}}(T^{3/4})$, which  is considerably inferior to the sought-after regret bound of $\widetilde{\mathcal{O}}(\sqrt{T})$. 
{Such deficiency of LinUCB-($\widehat{\phi}$) is further observed in all our experiments in  Section \ref{sec:experiments}  as well. These motivate our following  design of a new online learning algorithm to address the challenge of post-serving context, during which we also developed a new technical tool which may be of independent interest to the research community.}

\section{A Robustified and Generalized Elliptical Potential Lemma}
It turns out that solving  the learning problem above requires some novel designs; core to these novelties is a robustified and generalized version of  the well-known elliptical potential lemma (EPL), which  may be of independent interest.  This widely used lemma states a fact about a sequence of vectors $\vx_1, \cdots, \vx_T \in \mathbb{R}^d$. Intuitively, it captures the rate of the   sum of  additional information contained in each $\vx_t$, relative to its predecessors  $\vx_1, \cdots, \vx_{t-1}$. Formally,

\begin{lemma}[Original Elliptical Potential Lemma] 
Suppose (1) $\mX_0 \in \mathbb{R}^{d \times d}$ is any positive definite matrix; (2) $\vx_1, \ldots, \vx_T \in \mathbb{R}^d$ is any sequence of vectors; and (3) $ \mX_t=\mX_0+\sum_{s=1}^t \vx_s \vx_s^{\top}$. Then  the following inequality holds
{\small
\begin{align*}
    \sum_{t=1}^T 1 \wedge\left\|\vx_t\right\|_{\mX_{t-1}^{-1}}^2   &\leq  2 \log\left(\frac{\det\mX_T}{\det\mX_0}\right),
\end{align*}  }
where $a \wedge b = \min \{ a, b\}$ is the $\min$ among $a, b \in \mathbb{R}$. 
\end{lemma}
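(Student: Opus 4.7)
The plan is to prove the identity by combining the matrix determinant lemma with an elementary scalar inequality relating $1 \wedge y$ to $\log(1+y)$. The key observation is that each rank-one update $\mX_t = \mX_{t-1} + \vx_t \vx_t^\top$ multiplies the determinant by exactly $1 + \|\vx_t\|_{\mX_{t-1}^{-1}}^2$, so a telescoping argument on $\log \det \mX_t$ will convert the sum on the left-hand side into the log-determinant ratio on the right-hand side.

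First I would invoke the matrix determinant lemma, which gives
\begin{equation*}
    \det \mX_t = \det\!\bigl(\mX_{t-1} + \vx_t \vx_t^\top\bigr) = \det(\mX_{t-1}) \cdot \bigl(1 + \vx_t^\top \mX_{t-1}^{-1} \vx_t\bigr) = \det(\mX_{t-1}) \cdot \bigl(1 + \|\vx_t\|_{\mX_{t-1}^{-1}}^2\bigr).
\end{equation*}
Iterating this identity from $t = 1$ to $T$ and taking logarithms yields the exact telescoping identity
\begin{equation*}
    \log\!\left(\frac{\det \mX_T}{\det \mX_0}\right) = \sum_{t=1}^T \log\!\bigl(1 + \|\vx_t\|_{\mX_{t-1}^{-1}}^2\bigr).
\end{equation*}

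Next, I would establish the pointwise scalar inequality $1 \wedge y \le 2\log(1+y)$ for all $y \ge 0$, by splitting into cases: for $y \in [0,1]$ concavity of $\log$ gives $\log(1+y) \ge y \log 2 \ge y/2$; for $y \ge 1$ we have $2\log(1+y) \ge 2\log 2 \ge 1 = 1 \wedge y$. Applying this inequality term-by-term with $y = \|\vx_t\|_{\mX_{t-1}^{-1}}^2$ and summing over $t$ produces the desired bound
\begin{equation*}
    \sum_{t=1}^T 1 \wedge \|\vx_t\|_{\mX_{t-1}^{-1}}^2 \;\le\; 2 \sum_{t=1}^T \log\!\bigl(1 + \|\vx_t\|_{\mX_{t-1}^{-1}}^2\bigr) \;=\; 2\log\!\left(\frac{\det \mX_T}{\det \mX_0}\right).
\end{equation*}

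There is no real obstacle here; this is the standard proof of the classical EPL and every step is essentially a one-line computation. The only mildly delicate point is justifying the scalar inequality $1 \wedge y \le 2\log(1+y)$ cleanly across both regimes (and the constant $2$ is what forces the truncation by $1$ on the left-hand side, since $\log(1+y)/y \to 0$ as $y \to \infty$). The real work in the paper will instead lie in the robustified generalization announced just above this statement, where $\vx_t$ is replaced by a noisy estimate and the above matrix-determinant telescoping must be carried out while tracking the extra perturbation terms.
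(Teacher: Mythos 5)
Your proof is correct and follows essentially the same route the paper itself uses (inside the proof of its generalized Lemma~\ref{lem:gepl}, specialized to $p=1$): the rank-one determinant update $\det\mX_t = \det(\mX_{t-1})\,(1+\|\vx_t\|_{\mX_{t-1}^{-1}}^2)$, telescoping of $\log\det$, and the scalar bound $1\wedge y \le 2\log(1+y)$. Your case-split justification of that scalar inequality is a nice touch the paper omits; otherwise there is nothing to add.
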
   
To address our new contextual bandit setup with post-serving contexts, it turns out that we will need to  robustify and generlaize  the above lemma to accommodate noises in $\vx_t$ vectors and slower learning rates. Specifically, we present the following  variant of the EPL lemma. 
\begin{restatable}[Generalized Elliptical Potential Lemma]{lem}{lemGEPL}\label{lem:gepl} 
Suppose (1) $\mX_0 \in \mathbb{R}^{d \times d}$ is any positive definite matrix; (2) $\vx_1, \ldots, \vx_T \in \mathbb{R}^d$ is a sequence of vectors with bounded $l_2$ norm $\max_t \|\vx_t\| \leq L_x$; (3) $\vepsilon_1, \ldots, \vepsilon_T \in \mathbb{R}^d$ is a sequence of independent (not necessarily identical) bounded zero-mean   noises satisfying  $\max_{t} \|\vepsilon_t\| \leq L_{\epsilon}$ and $   \expect[\vepsilon_t \vepsilon_t^\top] \succcurlyeq \sigma^2_{\epsilon} \mI$ for any $t$; %
and (4) $\widetilde{\mX}_t$ is defined as follows:
\begin{equation*}\label{eq:data-matrix-GEPL}
    \widetilde{\mX}_t=\mX_0+\sum_{s=1}^t  (\vx_s + \vepsilon_s)  (\vx_s + \vepsilon_s)^{\top} \in \mathbb{R}^{d \times d}. 
\end{equation*} 
Then, for any $p \in [0, 1]$, the following inequality holds with probability at least $1-\delta$,
{\small
\begin{align}\label{eq:gepl-bound}
    \!\!\!\!\!\!\!\!\sum_{t=1}^T\left(1 \wedge\left\| \vx_t \right\|_{\widetilde{\mX}_{t-1}^{-1}}^2\right)^p &\leq  2^p T^{1-p}\log^p\left(\frac{\det \mX_T}{\det\mX_0}\right) +  \frac{  8 L_\epsilon^2 (L_\epsilon + L_x)^2 }{\sigma_{\epsilon}^4} \log\left(\frac{32d L_\epsilon^2 (L_\epsilon + L_x)^2 }{\delta \sigma_{\epsilon}^4}\right) 
\end{align}}
\end{restatable}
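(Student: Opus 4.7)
My strategy is to reduce the generalized EPL back to the original one by showing that, with high probability, $\widetilde{\mX}_{t-1} \succeq \mX_{t-1}$ for every $t$ beyond some explicit threshold $T_0$, where $\mX_t := \mX_0 + \sum_{s \leq t} \vx_s \vx_s^\top$. Writing out $\widetilde{\mX}_t$ gives the decomposition $\widetilde{\mX}_t = \mX_t + A_t + \mathcal{E}_t$, with cross-term $A_t := \sum_{s=1}^t (\vx_s \vepsilon_s^\top + \vepsilon_s \vx_s^\top)$ and noise Gram matrix $\mathcal{E}_t := \sum_{s=1}^t \vepsilon_s \vepsilon_s^\top$. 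The desired inequality $\widetilde{\mX}_t \succeq \mX_t$ is then equivalent to $\lambda_{\min}(\mathcal{E}_t) \geq \|A_t\|_{\mathrm{op}}$.

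\textbf{Matrix concentration.} Each $\vepsilon_s \vepsilon_s^\top$ is PSD with expectation $\succeq \sigma_\epsilon^2 \mI$ and operator norm at most $L_\epsilon^2$, so matrix Chernoff yields $\mathcal{E}_t \succeq (t\sigma_\epsilon^2/2)\mI$ uniformly for $t \geq \mathcal{O}\bigl((L_\epsilon^2/\sigma_\epsilon^2)\log(dT/\delta)\bigr)$ with probability $\geq 1-\delta/2$. For the cross term $A_t$, the increments are independent, mean-zero (the $\vx_s$ are deterministic/conditioned on), and have operator norm at most $2 L_\epsilon(L_x + L_\epsilon)$ once expanded via $\|\vepsilon_s\|\,\|\vx_s + \vepsilon_s\|$; a matrix Azuma/Freedman bound then gives $\|A_t\|_{\mathrm{op}} \leq \mathcal{O}\bigl(L_\epsilon(L_x + L_\epsilon)\sqrt{t \log(dT/\delta)}\bigr)$ uniformly in $t$ with probability $\geq 1-\delta/2$. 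On the intersection of the two good events, $\lambda_{\min}(\mathcal{E}_t) \geq \|A_t\|_{\mathrm{op}}$ holds as soon as $t\sigma_\epsilon^2/2 \geq \mathcal{O}\bigl(L_\epsilon(L_x+L_\epsilon)\sqrt{t\log(dT/\delta)}\bigr)$, i.e., for every $t \geq T_0$ with $T_0 = \tfrac{8 L_\epsilon^2 (L_\epsilon + L_x)^2}{\sigma_\epsilon^4}\log(\mathrm{const}/\delta)$; this is exactly the threshold appearing in the stated bound.

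\textbf{Finish via split, original EPL, and power-mean.} Split the target sum at $T_0$. For $t \leq T_0$, each summand is bounded by $1$, contributing at most $T_0$ in total --- which already matches the constant additive term on the right-hand side of \eqref{eq:gepl-bound}. For $t > T_0$, the good event gives $\widetilde{\mX}_{t-1}^{-1} \preceq \mX_{t-1}^{-1}$ and hence $1 \wedge \|\vx_t\|_{\widetilde{\mX}_{t-1}^{-1}}^2 \leq 1 \wedge \|\vx_t\|_{\mX_{t-1}^{-1}}^2$. The original EPL applied to the noise-free matrices $\mX_t$ yields $\sum_{t > T_0} 1 \wedge \|\vx_t\|_{\mX_{t-1}^{-1}}^2 \leq 2\log(\det \mX_T/\det \mX_0)$. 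Since $x \mapsto x^p$ is concave for $p \in [0,1]$, H\"older's inequality ($\sum b_t^p \leq T^{1-p}(\sum b_t)^p$) then gives $\sum_{t > T_0}(1 \wedge \|\vx_t\|_{\widetilde{\mX}_{t-1}^{-1}}^2)^p \leq 2^p T^{1-p}\log^p(\det \mX_T/\det \mX_0)$, which is the main term in the statement.

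\textbf{Main obstacle.} The delicate piece is the uniform matrix concentration in the second step: both $\lambda_{\min}(\mathcal{E}_t) \geq t\sigma_\epsilon^2/2$ and the $\|A_t\|_{\mathrm{op}}$ bound must hold simultaneously for \emph{all} $t \in [T_0,T]$, which forces either a martingale maximal inequality (matrix Freedman) or a union bound absorbing an extra $\log T$ into the confidence parameter. The other technical point is matching the precise factor $L_\epsilon(L_\epsilon + L_x)$ rather than a loose $L_x L_\epsilon$ in the Azuma step --- this is what produces the $(L_\epsilon+L_x)^2$ factor in the stated threshold and requires expanding the summand of $A_t$ in the form $\vepsilon_s(\vx_s + \vepsilon_s)^\top + (\vx_s + \vepsilon_s)\vepsilon_s^\top - 2\vepsilon_s\vepsilon_s^\top$ before bounding the operator norm.
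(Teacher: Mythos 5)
Your proposal is correct and follows the same skeleton as the paper's proof: establish the PSD domination $\widetilde{\mX}_{t-1} \succcurlyeq \mX_{t-1}$ with high probability for all $t$ beyond an explicit threshold $T_0$, bound the first $T_0$ summands trivially by $1$, reduce the tail to the noiseless EPL, and handle the exponent $p$ by concavity of $x \mapsto x^p$. The one place you genuinely diverge is the concentration step. The paper applies matrix Bernstein a single time to the centered sum of the full per-step differences $\mS_t = \vepsilon_t\vx_t^\top + \vx_t\vepsilon_t^\top + \vepsilon_t\vepsilon_t^\top$, using $\expect[\mS_t] = \expect[\vepsilon_t\vepsilon_t^\top] \succcurlyeq \sigma_\epsilon^2\mI$ to lower-bound $\lambda_{\min}\bigl(\sum_{s\le t}\mS_s\bigr)$ directly; you instead split into the noise Gram matrix $\mathcal{E}_t$ (lower-bounded via matrix Chernoff) and the cross term $A_t$ (upper-bounded in operator norm via matrix Azuma/Freedman) and balance the two. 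Both routes produce the same threshold order $L_\epsilon^2(L_\epsilon+L_x)^2\sigma_\epsilon^{-4}\log(\cdot)$; the paper's one-shot Bernstein is more economical, and --- importantly for matching the exact constant term in the statement --- it avoids the extra $\log T$ you flag, because the per-$t$ failure probabilities decay geometrically in $t$, so the union bound over $t \ge T_0$ is summed as a geometric series and costs only a factor $16 L_\epsilon^2(L_\epsilon+L_x)^2/\sigma_\epsilon^4$ inside the logarithm rather than a factor of $T$. The same geometric-series trick would also repair your two-event version, since both of your tails decay exponentially in $t$ at the relevant scale. One small wording correction: $\lambda_{\min}(\mathcal{E}_t) \ge \|A_t\|_{\mathrm{op}}$ is a sufficient condition for $A_t + \mathcal{E}_t \succcurlyeq 0$, not an equivalent one, but sufficiency is all your argument uses.
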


Note that the second term is independent of time horizon $T$ and only depends on the setup parameters.  Generally, this can be treated as a constant.    Before describing main proof idea of the lemma, we make a few remarks   regarding Lemma \ref{lem:gepl} to highlight the significance of these generalizations.

\begin{enumerate}%
    \item The original Elliptical Potential Lemma (EPL) corresponds to the specific case of $p = 1$, while Lemma \ref{lem:gepl} is applicable for any $p \in [0,1]$. Notably, the $(1-p)$ rate   in the $T^{1-p}$ term of Inequality \ref{eq:gepl-bound} is tight for \emph{every} $p$.  In fact, this rate is tight even for $\vx_t = 1 \in \mathbb{R}, \forall t$ and $\mX_0 = 1 \in \mathbb{R}$ since, under these conditions, $\left\| \vx_t \right\|_{\mX_{t-1}^{-1}}^2 = 1/t$ and, consequently, {\small $\sum_{t=1}^T\left(1 \wedge\left\| \vx_t \right\|_{\mX_{t-1}^{-1}}^2\right)^p = \sum_{t=1}^T t^{-p}$}, yielding a rate of $T^{1-p}$. 
    This additional flexibility gained by allowing a general $p \in [0,1]$ (with the original EPL corresponding to $p = 1$) helps us to accommodate slower convergence rates when learning the mean context from observed noisy contexts, as formalized in Assumption \ref{assump:learnabilityOnPhiAdvNew}.
    \item A crucial distinction between Lemma \ref{lem:gepl} and the original EPL lies in the definition of the noisy data matrix $\widetilde{\mX}_t$ in Equation \ref{eq:data-matrix-GEPL}, which permits noise. However, the measured context vector $\vx_t$ does \emph{not} have noise. This is beneficial in scenarios where a learner observes noisy contexts but seeks to establish an upper bound on the prediction error based on the underlying noise-free context or the mean context. Such situations are not rare  in real applications; our problem of contextual bandits with post-serving contexts is precisely one of such case --- while choosing an arm, we can   estimate the mean post-serving context conditioned on the observable pre-serving context but are only  able to observe the  noisy realization of post-serving contexts after acting.
    \item Other generalized variants of the EPL have been recently proposed and found to be useful in different contexts. For instance, \cite{carpentier2020elliptical} extends the EPL to allow for the $ \mX_t^{-p}$-norm, as opposed to the $ \mX_t^{-1}$-norm, while \cite{hamidi2022elliptical} explores a generalized form of the $1 \wedge\left\|\varphi(\vx_t)\right\|_{\mX_{t-1}^{-1}}^2$ term, which is motivated by variance reduction in non-Gaussian linear regression models. Nevertheless, to the best of our knowledge, our generalized version is novel and has not been identified in prior works.
\end{enumerate}
\begin{proof}[Proof Sketche of Lemma \ref{lem:gepl}] { 
The formal proof of this lemma is involved and deferred to Appendix \ref{appendix:gepl}. 
At a high level, our proof follows procedure for proving the  original EPL. However, to accommodate the noises in the data matrix, we have to introduce new matrix concentration tools to the original (primarily algebraic) proof,  and also identify the right conditions for the argument to go through. A key lemma to our proof is a high probability bound regarding the constructed noisy data matrix $ \widetilde{\mX}_t$ (Lemma \ref{lem:noisy-matrix-psd} in   Appendix \ref{appendix:gepl}) that we derive based on Bernstein's Inequality for matrices under spectral norm \citep{tropp2015introduction}.  We prove that, under mild assumptions on the noise,  {\small $\|\vx_t\|_{\widetilde{\mX}^{-1}_{t-1}}^2 \leq \|\vx_t\|_{{\mX}^{-1}_{t-1}}^2$} with high probability for any $t$.  
Next, we have to apply the union bound and this lemma to show that the above matrix inequality holds for \emph{every} $t \geq 1$ with high probability. Unfortunately, this turns out to not be true  because when $t$ is very small (e.g., $t=1$), the above inequality cannot hold with high probability. Therefore, we have to use the union bound in a carefully tailored way by  excluding all $t$'s  that are smaller than a certain threshold (chosen optimally by solving certain inequalities) and handling these terms with small $t$ separately (which is the reason of the second  $\mathcal{O}(\log(1/\delta))$ term in Inequality \ref{eq:gepl-bound}). Finally, we refine the analysis of sthe standard EPL by allowing the exponent $p$ in $(1 \land \|\vx_t\|_{\widetilde{\mX}^{-1}_{t-1}}^2)^p$ and derive  an upper bound on the sum {\small $\sum_{t=1}^T (1 \land \|\vx_t\|_{\widetilde{\mX}^{-1}_{t-1}}^2)^p$} with high probability. These together yeilds a robustified and generalized version of EPL as in Lemma \ref{lem:gepl}. } 
\end{proof}

\vspace{-0.5cm}
\section{No Regret Learning in Linear   Bandits with Post-Serving Contexts}

\subsection{The Main Algorithm}

In the ensuing section, we introduce our algorithm, \alg{}, designed to enhance linear contextual bandit learning through the incorporation of post-serving contexts and address the issue arose from the algorithm introduced in Section~\ref{sec:preliminary-analysis}. The corresponding pseudo-code is delineated in Algorithm~\ref{alg:ucb-toy}. Unlike the traditional LinUCB algorithm, which solely learns and sustains confidence sets for parameters (i.e., $\widehat{\vbeta}_a$ and $\widehat{\vtheta}_a$ for each $a$), our algorithm also \emph{simultaneously} manages the same for the post-serving context generating function, $\widehat{\phi}(\cdot)$. Below, we expound on our methodology for parameter learning and confidence set construction.

\begin{algorithm}[t]
  \caption{ \alg{}  (\emph{Linear UCB with post-serving contexts}) %
  }
  \label{alg:ucb-toy}
\begin{algorithmic}[1]
    \For{$t=0, 1, \ldots, T$}
    \State Receive the \emph{pre-serving context} $\vx _{t}$
    \State Compute the optimistic parameters by maximizing the UCB objective $${\left(a_t, {\widetilde{\phi}_{t}(\vx_t)}, \widetilde{\vw}_t\right) = {\underset{(a, \phi, \vw_a) \in [K] \times \calC_{t-1}\left(\widehat{\phi}_{t-1}, \vx_t\right) \times \calC_{t-1}(\widehat{\vw}_{t-1,a})  }{\argmax}} { \begin{bmatrix} \vx_t \\ \phi(\vx_t) \end{bmatrix}^\top \vw_{a} }}.$$ 
    \State Play the arm $a_t$ and receive the realized \emph{post-serving context} as $\vz_t$ and  the real-valued  reward $$r_{t, a_t} =   \begin{bmatrix} \vx_t \\ \vz_t \end{bmatrix}^\top \vw_{a_t}^{\star}  + \eta_t. $$
  \State Compute $\widehat{\vw}_{t,a}$ using Equation~\ref{eqn:w-closed-form} for each $a \in \calA$.
  \State Compute the estimated post-serving context generating function $\widehat{\phi}_t(\cdot)$ using ERM. 
    \State Update confidence sets $\calC_t(\widehat{\vw}_{t, a})$ and $\calC_t(\widehat{\phi}_t, \vx_t)$ for each $a$ based on Equations~\ref{eqn:w-confidence-set} and \ref{eqn:phi-confidence-set}.
    \EndFor
\end{algorithmic}
\end{algorithm}
\textbf{Parameter learning}. During each iteration $t$, we  fit the function $\widehat{\phi}_t(\cdot)$ and the parameters $\{\widehat{\vtheta}_{t, a}\}_{a\in\calA}$ and $\{\widehat{\vbeta}_{t, a}\}_{a\in \calA}$. To fit $\widehat{\phi}_t(\cdot)$, resort to the conventional empirical risk minimization~(ERM) framework. As for $\{\widehat{\vtheta}_{t, a}\}_{a\in\calA}$ and $\{\widehat{\vbeta}_{t, a}\}_{a\in \calA}$, we solve the following least squared problem for each arm $a$,
\begin{align}\label{eqn:w-closed-form}
    \ell_t (\vtheta_a, \vbeta_a) &= \sum_{s\in[t]: a_s = a}  \left(r_{s, a} - \vx_s^\top \vtheta_{a} - {\color{red}{\vz_s}}^{\top} \vbeta_{a}\right)^2+\lambda \left(\|\vtheta_a\|_2^2 + \|\vbeta_{a}\|_{2}^{2}\right).
\end{align}
For convenience, we use $\vw$ and $\vu$ to denote $(\vtheta, \vbeta)$ and $(\vx, \vz)$ respectively. The closed-form solutions to $\widehat{\vtheta}_{t, a}$ and $\widehat{\vbeta}_{t, a}$ for each arm $a \in \calA$ are 
\begin{align}
    \widehat{\vw}_{t, a} \coloneqq \begin{bmatrix}
        \widehat{\vtheta}_{t, a} \\
        \widehat{\vbeta}_{t, a}
    \end{bmatrix}= \mA_{t, a}^{-1} \vb_{t, a}
\textnormal{, where }
    \mA_{t, a} = \lambda \mI + \sum_{s: a_s = a}^t \vu_{s}\vu_s^\top
    \quad \text{and}\quad 
    \vb_{t, a} = \sum_{s: a_s = a}^t r_{s, a} \vu_s. \label{eq:new-estimator}
\end{align}
\textbf{Confidence set construction. } At iteration $t$, we construct the confidence set for $\widehat{\phi}_t(\vx_t)$ by
\begin{align}\label{eqn:phi-confidence-set}
    \calC_{t}\left(\widehat{\phi}_t, \vx_t\right) \coloneqq \left\{\vz \in \R^d: \left\|\widehat{\phi}_t(\vx_t) - \vz\right\|_2 \leq e^\delta_t\right\}.
\end{align}
Similarly, we can construct the confidence set for the parameters $\widehat{\vw}_{t, a}$ for each arm $a \in \calA$ by 
\begin{align}\label{eqn:w-confidence-set}
    \calC_t\left(\widehat{\vw}_{t, a}\right) \coloneqq \left\{\vw \in \R^{d_x + d_z}: \left\|\vw - \widehat{\vw}_{t, a}\right\|_{\mA_{t,a}}\leq \zeta_{t,a} \right\},  
\end{align}
where $\zeta_{t,a} = 2\sqrt{\lambda} + R_{\eta}\sqrt{ d_u\log\left((1+n_t(a)L_u^2/\lambda)/\delta\right)}$ and $n_t(a) = \sum_{s=1}^t \mathbb{1}[a_s=a]$. Additionally, we further define $\zeta_{t} \coloneqq \max_{a \in \calA} \zeta_{t,a}$. By the assumption \ref{assump:learnabilityOnPhiAdvNew} and Lemma~\ref{thm:conf-2}, we have the followings hold with probability at least $1-\delta$ for each of the following events,
\begin{align}
    \phi^\star(\vx_t) \in \calC_{t}\left(\widehat{\phi}_t, \vx_t\right) \quad \textnormal{and}\quad  \vw^\star \in \calC_{t}\left(\widehat{\vw}_{t,a}\right).
\end{align}

\subsection{Regret Analysis}

In the forthcoming section, we establish the regret bound. Our proof is predicated upon the conventional proof of LinUCB~\citep{li2010contextual} in conjunction with our robust elliptical potential lemma. The pseudo-regret~\citep{audibert2009exploration} within this partial contextual bandit problem is defined as,
\begin{align}
    &R_T=\regret =  \sum_{t=1}^T \left( r_{t, a^\star_t} - r_{t,a_t} \right), \label{eq:pseudo-regret}
\end{align}
in which we reload the notation of reward by ignoring the noise,
\begin{align}
    r_{t, a} = \langle \vtheta_{a}^\star, \vx_t \rangle  +  \langle \vbeta_{a}^\star, \phi^\star(\vx_t) \rangle \quad \textnormal{and} \quad a^\star_t = \argmax_{a \in \actions}\  \langle \vtheta_{a}^\star, \vx_t \rangle + \langle\vbeta_{a}^\star, {\phi^{\star}(\vx _t)} \rangle. \label{eq:pc-astar}
\end{align}
It is crucial to note that our definition of the optimal action, $a^{\star}_t$, in Eq.~\ref{eq:pc-astar} depends on $\phi^{\star}(\vx _t)$ as opposed to $\vz_t$. This dependency ensures a more pragmatic benchmark, as otherwise, the noise present in $\vz$ would invariably lead to a linear regret, regardless of the algorithm implemented. In the ensuing section, we present our principal theoretical outcomes, which provide an upper bound on the regret of our \alg{} algorithm.

\begin{restatable}[Regret of \alg{}]{thm}{thmRegretPLINUCB}\label{thm:regret-pLinUCB}
The regret of poLinUCB in Algorithm~\ref{alg:ucb-toy} is upper bounded by $ \widetilde{\mathcal{O}}\left(T^{1-\alpha}d_u^{\alpha} + d_u\sqrt{T K }\right)$ with probability at least $1-\delta$, if $T=\Omega(\log(1/\delta))$.
\end{restatable}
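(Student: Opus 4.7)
The plan is to follow the optimism-based regret decomposition pioneered for LinUCB, augment it with a new term stemming from the estimation of the post-serving-context mapping $\phi^{\star}$, and then invoke the generalized elliptical potential lemma (Lemma~\ref{lem:gepl}) to control both cumulative quantities. First I would take a union bound over $t\leq T$ and $a\in\calA$ of the two confidence-set events in Equations~\ref{eqn:phi-confidence-set}--\ref{eqn:w-confidence-set}, which together guarantee $\phi^{\star}(\vx_t)\in\calC_{t-1}(\widehat{\phi}_{t-1},\vx_t)$ and $\vw^{\star}_{a}\in\calC_{t-1}(\widehat{\vw}_{t-1,a})$ for every $(t,a)$ with probability at least $1-\delta$. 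Conditioned on this event, optimism of the UCB objective in Step 3 of Algorithm~\ref{alg:ucb-toy} implies
\begin{align*}
r_{t,a^{\star}_t}-r_{t,a_t}\;\leq\;[\vx_t;\widetilde{\phi}_t(\vx_t)]^{\top}\widetilde{\vw}_t\;-\;[\vx_t;\phi^{\star}(\vx_t)]^{\top}\vw^{\star}_{a_t}.
\end{align*}
Adding and subtracting $[\vx_t;\widetilde{\phi}_t(\vx_t)]^{\top}\vw^{\star}_{a_t}$, the right-hand side splits into a \emph{parameter-estimation} part $\widetilde{\vu}_t^{\top}(\widetilde{\vw}_t-\vw^{\star}_{a_t})$, bounded by $2\zeta_{t-1,a_t}\|\widetilde{\vu}_t\|_{\mA_{t-1,a_t}^{-1}}$ via Cauchy-Schwarz, plus a \emph{context-estimation} part $(\widetilde{\phi}_t(\vx_t)-\phi^{\star}(\vx_t))^{\top}\vbeta^{\star}_{a_t}$, which is at most $2e_t^{\delta}$ since both $\widetilde{\phi}_t(\vx_t)$ and $\phi^{\star}(\vx_t)$ lie in the same $\|\cdot\|_2$-ball of radius $e_t^{\delta}$ around $\widehat{\phi}_{t-1}(\vx_t)$.

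For the exploration term, the central technical step is to sum $\|\widetilde{\vu}_t\|_{\mA_{t-1,a_t}^{-1}}$ while accounting for the fact that $\mA_{t,a}$ is built from the \emph{noisy} observed vectors $\vu_s=[\vx_s;\phi^{\star}(\vx_s)+\veps_s]$; this is exactly the scenario addressed by our generalized EPL. I would first replace $\widetilde{\vu}_t$ by the clean vector $\vu^{\star}_t=[\vx_t;\phi^{\star}(\vx_t)]$, paying an additive error $\|\widetilde{\vu}_t-\vu^{\star}_t\|_{\mA_{t-1,a_t}^{-1}}\leq 2e_t^{\delta}/\sqrt{\lambda}$ that is absorbed into the second term after fixing $\lambda=\Theta(L_u^2)$. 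Then I apply Lemma~\ref{lem:gepl} with $p=1$ per arm, using $\vu^{\star}_s$ as the clean vectors and $[0;\veps_s]$ as the independent zero-mean noises, and sum over arms to obtain $\sum_{t=1}^T \bigl(1\wedge\|\vu^{\star}_t\|_{\mA_{t-1,a_t}^{-1}}^{2}\bigr)=\widetilde{\mathcal{O}}(Kd_u)+\bigO{\log(1/\delta)}$. A Cauchy-Schwarz over $t$ then yields $\sum_{t=1}^T\min\{1,\|\vu^{\star}_t\|_{\mA_{t-1,a_t}^{-1}}\}\leq\widetilde{\mathcal{O}}(\sqrt{TKd_u})$, and multiplying by $\zeta_T=\widetilde{\mathcal{O}}(\sqrt{d_u})$ delivers the $\widetilde{\mathcal{O}}(d_u\sqrt{TK})$ component of the regret.

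For the $\phi$-estimation contribution, Assumption~\ref{assump:learnabilityOnPhiAdvNew} gives $\sum_{t=1}^T e_t^{\delta}\leq C_0\log(T/\delta)\sum_{t=1}^T(\|\vx_t\|_{\mX_{t-1}^{-1}}^{2})^{\alpha}$. Invoking Lemma~\ref{lem:gepl} with exponent $p=\alpha\leq 1/2$ on the (noise-free) pre-serving data matrix $\mX_t$ bounds this sum by $\widetilde{\mathcal{O}}(T^{1-\alpha}d_x^{\alpha})$, which matches the first term in the claimed regret. Adding the two contributions, together with the clipping of each per-step regret at the constant gap implied by the structural assumptions $\langle\vtheta_a^{\star}-\vtheta_b^{\star},\vx_t\rangle\leq 1$ and $\langle\vbeta_a^{\star}-\vbeta_b^{\star},\vz_t\rangle\leq 1$, completes the bound.

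The principal obstacle is the interaction between the optimistic vector $\widetilde{\vu}_t$ and the data matrix $\mA_{t-1,a_t}$ whose constituent vectors carry the noise $\veps_s$ in their $\vz$-coordinates: a naive analysis that tries to inflate $\zeta_{t-1}$ to absorb this noise (as would be forced without a robust EPL) inevitably degrades the rate to $T^{3/4}$, exactly as in Proposition~\ref{thm:regret-pLinUCBPhi}. Avoiding this pitfall crucially relies on the robustification in Lemma~\ref{lem:gepl}, which permits measuring a clean $\vu^{\star}_t$ against a noisy data matrix and still recovers the $\widetilde{\mathcal{O}}(d_u)$ log-determinant growth. Finally, the mild condition $T=\Omega(\log(1/\delta))$ ensures that the additive $\log(1/\delta)$ constant from Lemma~\ref{lem:gepl}, together with the burn-in implicit in its high-probability guarantee, is absorbed into the $\widetilde{\mathcal{O}}(\cdot)$ notation.
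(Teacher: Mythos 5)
Your overall architecture is the same as the paper's: optimism, a split of the per-step regret into a parameter-estimation term and a $\phi$-estimation term, and two invocations of the generalized EPL (with $p=1$ for the exploration term against the noisy data matrix, and with an $\alpha$-dependent exponent for the accumulated $\phi$-errors). The only substantive divergence is in how you set up the decomposition, and that is where one step fails as written. You add and subtract $[\vx_t;\widetilde{\phi}_t(\vx_t)]^{\top}\vw^{\star}_{a_t}$, so the optimistic vector $\widetilde{\vu}_t=[\vx_t;\widetilde{\phi}_t(\vx_t)]$ ends up inside the $\mA_{t-1,a_t}^{-1}$-norm, and you then swap it for the clean $\vu^{\star}_t=[\vx_t;\phi^{\star}(\vx_t)]$ at a cost of $2\zeta_{t-1,a_t}\cdot e_t^{\delta}/\sqrt{\lambda}$ per step. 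Summed over $t$, with $\zeta_T=\widetilde{\mathcal{O}}(\sqrt{d_u})$ and $\lambda=\Theta(L_u^2)$ as you propose, this contributes $\widetilde{\mathcal{O}}\bigl(\sqrt{d_u}\,T^{1-\alpha}d_u^{\alpha}\bigr)$, which for $\alpha<1/2$ and large $T$ dominates both $T^{1-\alpha}d_u^{\alpha}$ and $d_u\sqrt{TK}$; your claim that it is ``absorbed into the second term'' is therefore incorrect, and your argument as stated only yields $\widetilde{\mathcal{O}}(T^{1-\alpha}d_u^{\alpha+1/2}+d_u\sqrt{TK})$.

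The paper avoids this substitution entirely by choosing the other split: it writes $[\vx_t;\widetilde{\phi}_t(\vx_t)]=[\vzero;\widetilde{\phi}_t(\vx_t)-\phi^{\star}(\vx_t)]+[\vx_t;\phi^{\star}(\vx_t)]$ inside the inner product with $\widetilde{\vw}_t-\vw^{\star}_{a_t}$, so the $\mA_{t-1,a_t}^{-1}$-norm is applied directly to the clean vector and the entire $\phi$-error collapses into a single term $\langle\widetilde{\phi}_t(\vx_t)-\phi^{\star}(\vx_t),\widetilde{\vbeta}_{a_t}\rangle\leq 2e_t^{\delta}\|\widetilde{\vbeta}_{a_t}\|$, with no $\zeta_T/\sqrt{\lambda}$ inflation. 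Your route can be repaired either by adopting that decomposition or by retuning $\lambda=\Theta(d_u)$ so that $\zeta_T/\sqrt{\lambda}=\widetilde{\mathcal{O}}(1)$, but as written the bound you derive is weaker than the theorem by a factor of $\sqrt{d_u}$ in the first term. The remaining steps (per-arm EPL with $p=1$, Cauchy--Schwarz in $t$, EPL with exponent $\alpha$ on $\sum_t e_t^{\delta}$ versus the paper's exponent $2\alpha$ on $\sum_t (e_t^{\delta})^2$ after a global Cauchy--Schwarz) are equivalent to the paper's and correct.
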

{The first term in the bound is implicated by learning the function $\phi^\star(\cdot)$. Conversely, the second term resembles the one derived in conventional contextual linear bandits, with the exception that our dependency on $d_u$ is linear. This linear dependency is a direct consequence of our generalized robust elliptical potential lemma. The proof is deferred in Appendix~\ref{appendix:regret}.}

\section{Generalizations}\label{sec:extensions}
So far we have focused on a basic linear bandit setup with post-serving features. Our results and analysis can be easily generalized to   other variants of linear bandits, including those with feature mappings, and below we highlight some of these generalizations. They use similar proof ideas, up to some technical modifications; we thus defer all their formal proofs to Appendix \ref{appendix:extensitions}.

\subsection{Generalization to Action-Dependent Contexts} 
Our basic setup in Section \ref{sec:prob} has a single context $\vx_t$ at any time step $t$. This can be generalized to   action-dependent contexts  settings as studied in previous works (e.g., \citet{li2010contextual}). That is, during each iteration indexed by $t$, the learning algorithm observes a context $\vx_{t, a}$ for each individual arm $a \in \calA$. Upon executing the action of pulling arm $a_t$, the corresponding post-serving context $\vz_{t, a_t}$ is subsequently revealed. Notwithstanding, the post-serving context for all alternative arms remains unobserved. The entire procedure is the same as that of Section \ref{sec:prob}.

In extending this framework, we persist in our assumption that for each arm $a \in \calA$, there exists a specific function $\phi^\star_{a}(\cdot): \mathbb{R}^{d_x} \rightarrow \mathbb{R}^{d_z}$ that generates the post-serving context $\vz$ upon receiving $\vx$ associated with arm $a \in \calA$. The primary deviation from our preliminary setup lies in the fact that we now require the function $\phi^\star_{a}(\cdot)$ to be learned for each arm independently. The reward is generated as
\begin{align*}
    r_{t, a_t} = \langle \vtheta_{a_t}^\star, \vx_{t, a_t} \rangle + \langle \vbeta_{a_t}^\star, \vz_{t, a_t}\rangle + \eta_t. 
\end{align*}

The following proposition shows our regret bound for this action-dependent context case. Its proof   largely draws upon the proof idea of Theorem~\ref{thm:regret-pLinUCB} and also relies on the generalized EPL Lemma \ref{lem:gepl}.

 \begin{restatable}[]{prop}{propActionDependentContextsRegret}\label{prop:ADC-plinUCB-regret-bound}
The regret of poLinUCB in Algorithm~\ref{alg:ucb-toy} for action-dependent contexts is upper bounded by $ \widetilde{\mathcal{O}}\left(T^{1-\alpha}d_u^{\alpha}\sqrt{K} + d_u\sqrt{T K }\right)$ with probability at least $1-\delta$ if $T=\Omega(\log(1/\delta))$.
 \end{restatable}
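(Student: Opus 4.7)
The plan is to adapt the proof of Theorem~\ref{thm:regret-pLinUCB} to the action-dependent context setting, where the principal modification is that the post-serving context generator $\phi^\star_a$ must now be learned separately from the $n_t(a)$ rounds in which arm $a$ has previously been pulled. First I would verify that, by optimism and on the good event that every $\phi^\star_a$ and $\vw^\star_a$ lies in its respective confidence set, the instantaneous regret at round $t$ is bounded by the sum of two terms: (i) a post-serving prediction error for the chosen arm, roughly $\|\vbeta^\star_{a_t}\|_2 \cdot e^\delta_{n_{t-1}(a_t), a_t}$, where $e^\delta_{n, a}$ denotes the accuracy of estimating $\phi^\star_a$ from the $n$ samples collected for arm $a$; and (ii) the usual UCB width $\zeta_{t-1}\|\vu_{t,a_t}\|_{\mA_{t-1, a_t}^{-1}}$, with $\vu_{t,a} = (\vx_{t,a}, \vz_{t,a})$ and $\mA_{t,a}$ built from the noisy augmented features observed during rounds when arm $a$ was pulled.

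For the UCB term (ii), I would sum over $t$ and regroup by arm to obtain $\sum_a \sum_{t: a_t = a} \|\vu_{t,a}\|_{\mA_{t-1,a}^{-1}}$. Applying Cauchy--Schwarz and then the generalized EPL (Lemma~\ref{lem:gepl}) with $p=1$ to each arm's sub-sequence yields $\sum_a \sqrt{n_T(a) \cdot d_u \log(\cdot)}$, and a final Cauchy--Schwarz across arms combined with $\sum_a n_T(a) = T$ produces the $d_u \sqrt{TK}$ contribution exactly as in the main theorem. The generalized form of EPL is crucial here (rather than the classical one) because $\mA_{t-1,a}$ is a noisy data matrix built from $\vu_s = (\vx_s, \phi^\star_{a}(\vx_s)+\vepsilon_s)$, not the clean mean features.

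For term (i), the learnability assumption applied to arm $a$'s data gives $e^\delta_{n, a} \le C_0 (\|\vx_{t,a}\|_{\mX_{t-1, a}^{-1}}^2)^\alpha \log(n/\delta)$, where $\mX_{t, a}$ is built from the pre-serving contexts seen while pulling arm $a$. Summing across $t$ and regrouping by arm, I would apply Lemma~\ref{lem:gepl} with exponent $p=\alpha$ separately to each arm's sub-sequence, yielding $\sum_a n_T(a)^{1-\alpha} d_x^\alpha$ up to logarithmic factors. The main obstacle, and the source of the extra $\sqrt{K}$ factor relative to Theorem~\ref{thm:regret-pLinUCB}, lies in this aggregation step: since $\sum_a n_T(a) = T$, concavity of $x \mapsto x^{1-\alpha}$ (Jensen's inequality) gives $\sum_a n_T(a)^{1-\alpha} \le K^\alpha T^{1-\alpha}$, and invoking $\alpha \le 1/2$ we bound $K^\alpha \le \sqrt{K}$, producing the $T^{1-\alpha} d_u^\alpha \sqrt{K}$ contribution.

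Combining both contributions delivers the claimed regret bound. The most delicate point to verify carefully is that Lemma~\ref{lem:gepl} can be applied per-arm despite the data-dependent splitting of rounds across arms. Since the lemma only requires the $\vx_s$ to be arbitrarily chosen and the $\vepsilon_s$ to be independent zero-mean noises with the stated spectral lower bound, and since these properties are preserved when we condition on the history and restrict to the sub-sequence $\{s : a_s = a\}$, the lemma remains applicable in the standard martingale-adapted sense, as is routine for LinUCB-type analyses.
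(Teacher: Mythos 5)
Your proposal is correct and follows essentially the same route as the paper: the same optimism-based decomposition into a per-arm $\phi^\star_a$-estimation error and a UCB-width term, with the generalized EPL applied per arm to each, and the extra $\sqrt{K}$ arising from aggregating the $K$ separately-learned $\phi^\star_a$'s. The only (immaterial) difference is in that aggregation step --- you use Jensen to get $\sum_a n_T(a)^{1-\alpha} \leq K^{\alpha}T^{1-\alpha}$ and then $K^{\alpha}\leq\sqrt{K}$, whereas the paper bounds each arm's contribution crudely by $T^{1-2\alpha}$ inside the square root; both land on the same $T^{1-\alpha}d_u^{\alpha}\sqrt{K}$ term.
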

{The main difference with the bound in Theorem~\ref{thm:regret-pLinUCB} is the additional $\sqrt{K}$ appeared in the first term, which is caused by learning multiple $\phi^\star_a(\cdot)$ functions with $a \in \calA$.}

\subsection{Generalization to Linear Stochastic Bandits}

Another variant of linear bandits is the \emph{linear stochastic bandits} setup (see, e.g., \citep{abbasi2011improved}). This model allows infinitely many arms, which consists of a decision set $D_t \subseteq \mathbb{R}^d$ at time $t$, and the learner picks an action $\vx_t \in D_t$. %
 This setup naturally generalizes to our problem with post-serving contexts. That is, at iteration $t$, the learner selects an arm $\vx_t \in D_t$ first,   receives   reward $r_{t, \vx_t}$,  and then observe the post-serving feature $\vz_t$ conditioned on $\vx_t$. 
Similarly, we  assume the existence of a mapping $\phi^\star(\vx_t) = \expect[\vz_t|\vx_z]$ that satisfies the Assumption~\ref{assump:learnabilityOnPhiAdvNew}. Consequently, the realized reward is generated as follows where   $\vtheta^*, \vbeta^*$ are unknown parameters: 
\begin{align*}
    r_{t, \vx_t} = \langle \vx_t,  \vtheta^\star \rangle +  \langle \vz_t, \vbeta^\star \rangle + \eta_t.
\end{align*}
Therefore, the learner needs to estimate the linear parameters $\widehat{\vtheta}$ and $\widehat{\vbeta}$, as well as the function $\widehat{\phi}(\cdot)$. We obtain the following proposition for the this setup.
 \begin{restatable}[]{prop}{propAbbasiRegret}\label{prop:abbasi-plinUCB-regret-bound}
The regret of poLinUCB in Algorithm~\ref{alg:ucb-toy} for the above setting is upper bounded by $ \widetilde{\mathcal{O}}\left(T^{1-\alpha}d_u^{\alpha} + d_u\sqrt{T}\right)$ with probability at least $1-\delta$ if $T=\Omega(\log(1/\delta))$.
 \end{restatable}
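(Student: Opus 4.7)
The plan is to mirror the proof of Theorem~\ref{thm:regret-pLinUCB} while specializing to the infinite-arm (linear stochastic bandit) setting, where there is a \emph{single} parameter pair $(\vtheta^\star,\vbeta^\star)$ instead of one per arm. The first step is the standard optimistic regret decomposition: letting $\vx_t^\star$ be the true maximizer of $\langle \vx,\vtheta^\star\rangle + \langle \phi^\star(\vx),\vbeta^\star\rangle$ over $D_t$ and $(\vx_t,\widetilde{\phi}_t,\widetilde{\vw}_t)$ the triple returned by the UCB rule, the containment $\vw^\star\in\calC_{t-1}(\widehat{\vw}_{t-1})$ and $\phi^\star(\vx_t^\star)\in\calC_{t-1}(\widehat{\phi}_{t-1},\vx_t^\star)$ imply, via the maximization step, that the instantaneous regret $r_{t,\vx_t^\star}-r_{t,\vx_t}$ is dominated by the UCB width at the played arm.

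Next, I would split this width along the two confidence sets. Writing $\vu^\star_t=[\vx_t;\phi^\star(\vx_t)]$ and using Cauchy--Schwarz together with the ellipsoidal bounds \eqref{eqn:phi-confidence-set}--\eqref{eqn:w-confidence-set}, the per-step regret is bounded (up to constants) by
\begin{equation*}
    \underbrace{2\zeta_t \,\|\vu^\star_t\|_{\mA_{t-1}^{-1}}}_{\text{error from estimating }\vw^\star} \; + \; \underbrace{2\|\vbeta^\star\|_2\, e_t^\delta}_{\text{error from estimating }\phi^\star},
\end{equation*}
where $\mA_t=\lambda\mI+\sum_{s\le t}\vu_s\vu_s^\top$ is now a \emph{single} Gram matrix (no per-arm splitting) and $\zeta_t=\widetilde{O}(\sqrt{d_u})$. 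This is the key structural difference from Theorem~\ref{thm:regret-pLinUCB}: because all observations feed into one $\mA_t$, the standard linear-stochastic-bandit analysis avoids the $\sqrt{K}$ factor.

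To sum the first term, I would apply Cauchy--Schwarz in $t$ and then invoke the generalized EPL (Lemma~\ref{lem:gepl}) with $p=1$. This is necessary because $\mA_{t-1}$ is built from the \emph{noisy} vectors $\vu_s=[\vx_s;\phi^\star(\vx_s)+\vepsilon_s]$ while the measurement direction $\vu^\star_t$ is the noise-free one; Lemma~\ref{lem:gepl} is tailored exactly for this mismatch and yields $\sum_{t=1}^T \|\vu^\star_t\|_{\mA_{t-1}^{-1}}^2=\widetilde{O}(d_u)$, hence a total contribution of $\widetilde{O}(d_u\sqrt{T})$ after multiplying by $\zeta_T$. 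For the second term, Assumption~\ref{assump:learnabilityOnPhiAdvNew} gives $e_t^\delta\le C_0(\|\vx_t\|_{\mX_{t-1}^{-1}}^2)^\alpha\log(t/\delta)$, and since $\mX_t$ is built from the noise-free $\vx_s$'s I can apply the standard EPL with the exponent-$p$ refinement ($p=\alpha$) established inside the proof of Lemma~\ref{lem:gepl}, obtaining $\sum_t e_t^\delta=\widetilde{O}(T^{1-\alpha}d_x^\alpha)\le \widetilde{O}(T^{1-\alpha}d_u^\alpha)$.

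Combining the two contributions gives the claimed $\widetilde{O}(T^{1-\alpha}d_u^\alpha + d_u\sqrt{T})$ bound. The main subtlety, and the step I would be most careful about, is the high-probability bookkeeping: the confidence-set containment for $\widehat{\vw}_t$, the containment for $\widehat{\phi}_t$, and the event under which Lemma~\ref{lem:gepl} holds all need to be simultaneously valid across all $t\in[T]$. A union bound over $T$ steps inflates each $\delta$ by a factor of $T$, and the constant $T$-independent term in \eqref{eq:gepl-bound} of the GEPL is what forces the mild requirement $T=\Omega(\log(1/\delta))$; verifying that these pieces fit together cleanly (and that no $K$-dependence sneaks back in through the radius $\zeta_t$, which here is defined without splitting by arm count $n_t(a)$) is the principal technical obstacle.
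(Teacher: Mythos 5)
Your proposal is correct and follows essentially the same route as the paper: the paper's own proof is a short reduction to the argument of Theorem~\ref{thm:regret-pLinUCB}, using the optimistic decomposition into a parameter-estimation term (handled by the generalized EPL over the single Gram matrix, so no $K$-dependence) and a $\phi$-estimation term (handled by Assumption~\ref{assump:learnabilityOnPhiAdvNew} and the exponent-$p$ refinement). The only cosmetic difference is that the paper bounds $\sum_t \Delta_t^2$ wholesale via $R_T \le \sqrt{T\sum_t \Delta_t^2}$ and applies the EPL with exponent $2\alpha$ to the squared errors $\left(e_t^\delta\right)^2$, whereas you sum $e_t^\delta$ directly with exponent $\alpha$; both yield the same $\widetilde{\mathcal{O}}\left(T^{1-\alpha}d_u^{\alpha} + d_u\sqrt{T}\right)$ rate.
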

 \subsection{Generalization to Linear Bandits with Feature Mappings}
 Finally, we briefly remark that while we have so far assumed that the arm parameters are directly linear in the context $\vx_t, \vz_t$, just like classic linear bandits our analysis can be easily generalized to accommodate feature mapping $\pi^x(\vx_t)$ and $\pi^z(\vz_t) = \pi^z(\phi(\vx_t) + \varepsilon_t)$. Specifically, if the reward generation process is $ r_{ a} = \langle \vtheta_{a}^\star, \pi^x(\vx_t)  \rangle + \langle \vbeta_{a}^\star, \pi^z(\vz_{t})\rangle + \eta_t$ instead, then we can simply view $  \Tilde{\vx_t} = \pi^x(\vx_t) $ and $\Tilde{\vz_t} = \pi^z(\vz_t)$ as the new features, with  $\Tilde{\phi}(\vx_t)  = \mathbb{E}_{\vepsilon_t} [\pi^z( \phi(\vx_t) + \vepsilon_t)]$. By working with   $  \Tilde{\vx_t}, \Tilde{\vz_t}  , \Tilde{\phi}$, we shall obtain the same guarantees as Theorem \ref{thm:regret-pLinUCB}.

\section{Experiments}\label{sec:experiments}
This section presents a comprehensive evaluation of our proposed \alg{} algorithm on both synthetic and real-world data, demonstrating its effectiveness in incorporating follow-up information and outperforming the LinUCB($\widehat{\phi}$) variant. \revision{More empirical results can be found in Appendix~\ref{app:extra-experiments}.}

\subsection{Synthetic Data with Ground Truth Models}

\begin{figure}[t]
\vspace{-0.7cm}
    \centering
    \includegraphics[width=0.85\textwidth]{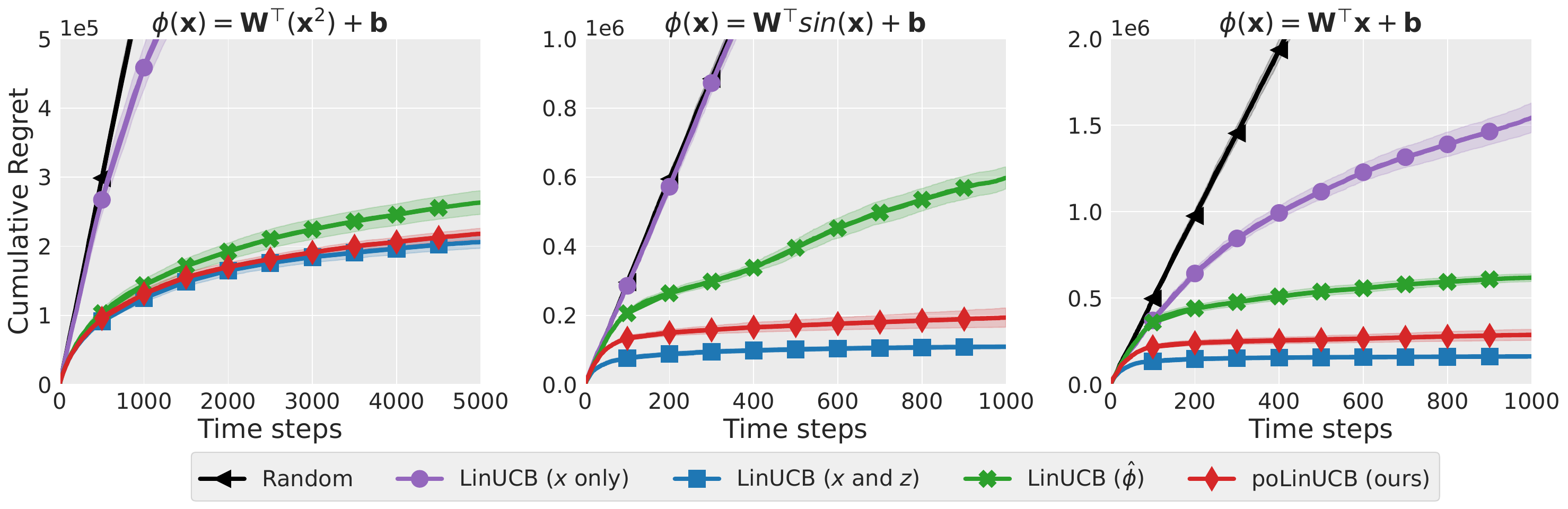}
    \vspace{-0.2cm}
    \caption{Cumulative Regret in three synthetic environments. Comparisons of different algorithms in terms of cumulative regret across the three synthetic environments. Our proposed poLinUCB (ours) consistently outperforms other strategies~(except for LinUCB which has access to the post-serving context during arm selection), showcasing its effectiveness in utilizing post-serving contexts. The shaded area denotes the standard error computed using 10 different random seeds.}
    \vspace{-0.6cm}
    \label{fig:synthetic-experiments}
\end{figure}

\paragraph{Evaluation Setup.} We adopt three different synthetic environments that are representative of a range of mappings from the pre-serving context to the post-serving context: polynomial, periodicical and linear functions. The pre-serving contexts are sampled from a uniform noise in the range $[-10, 10]^{d_x}$, and Gaussian noise is employed for both the post-serving contexts and the rewards. In each environment, the dimensions of the pre-serving context ($d_x$) and the post-serving context ($d_z$) are of $100$ and $5$, respectively with 10 arms ($K$). The evaluation spans $T=1000$ or $5000$ time steps, and each experiment is repeated with $10$ different seeds. The cumulative regret for each policy in each environment is then calculated to provide a compararison.
\vspace{-0.2cm}
\paragraph{Results and Discussion.} Our experimental results, which are presented graphically in Figures~\ref{fig:synthetic-experiments}, provide strong evidence of the superiority of our proposed poLinUCB algorithm. Across all setups, we observe that the LinUCB ($x$ and $z$) strategy, which has access to the post-serving context during arm selection, consistently delivers the best performance, thus serving as the upper bound for comparison. On the other hand, the Random policy, which does not exploit any environment information, performs the worst, serving as the lower bound. Our proposed poLinUCB (ours) outperforms all the other strategies, including the LinUCB ($\hat{\phi}$) variant, in all three setups, showcasing its effectiveness in adaptively handling various mappings from the pre-serving context to the post-serving context. Importantly, poLinUCB delivers significantly superior performance to LinUCB ($x$ only), which operates solely based on the pre-serving context.

\subsection{Real World Data without Ground Truth}
\noindent \textbf{Evaluation Setup.} The evaluation was conducted on a real-world dataset, MovieLens~\citep{harper2015movielens}, where the task is to recommend movies~(arms) to a incoming user~(context). Following~\citet{yao2023bad}, we first map both movies and users to 32-dimensional real vectors using a  neural network trained for predicting the rating.   Initially, $K=5$ movies were randomly sampled to serve as our arms and were held fixed throughout the experiment. The user feature vectors were divided into two parts serving as the pre-serving context~($d_x=25$) and the post-serving context~($d_z=7$). We fit the function $\phi(\vx)$ using a two-layer neural network with 64 hidden units and ReLU activation. The network was trained using the Adam optimizer with a learning rate of 1e-3. At each iteration, we randomly sampled a user from the dataset and exposed only the pre-serving context $\vx$ to our algorithm. The reward was computed as the dot product of the user's feature vector and the selected movie's feature vector and was revealed post the movie selection. The evaluation spanned $T=500$ iterations and repeated with $10$ seeds. 

\begin{wrapfigure}{r}{0.38\textwidth} %
  \centering
  \vspace{-1.2cm}
  \includegraphics[width=0.4\textwidth]{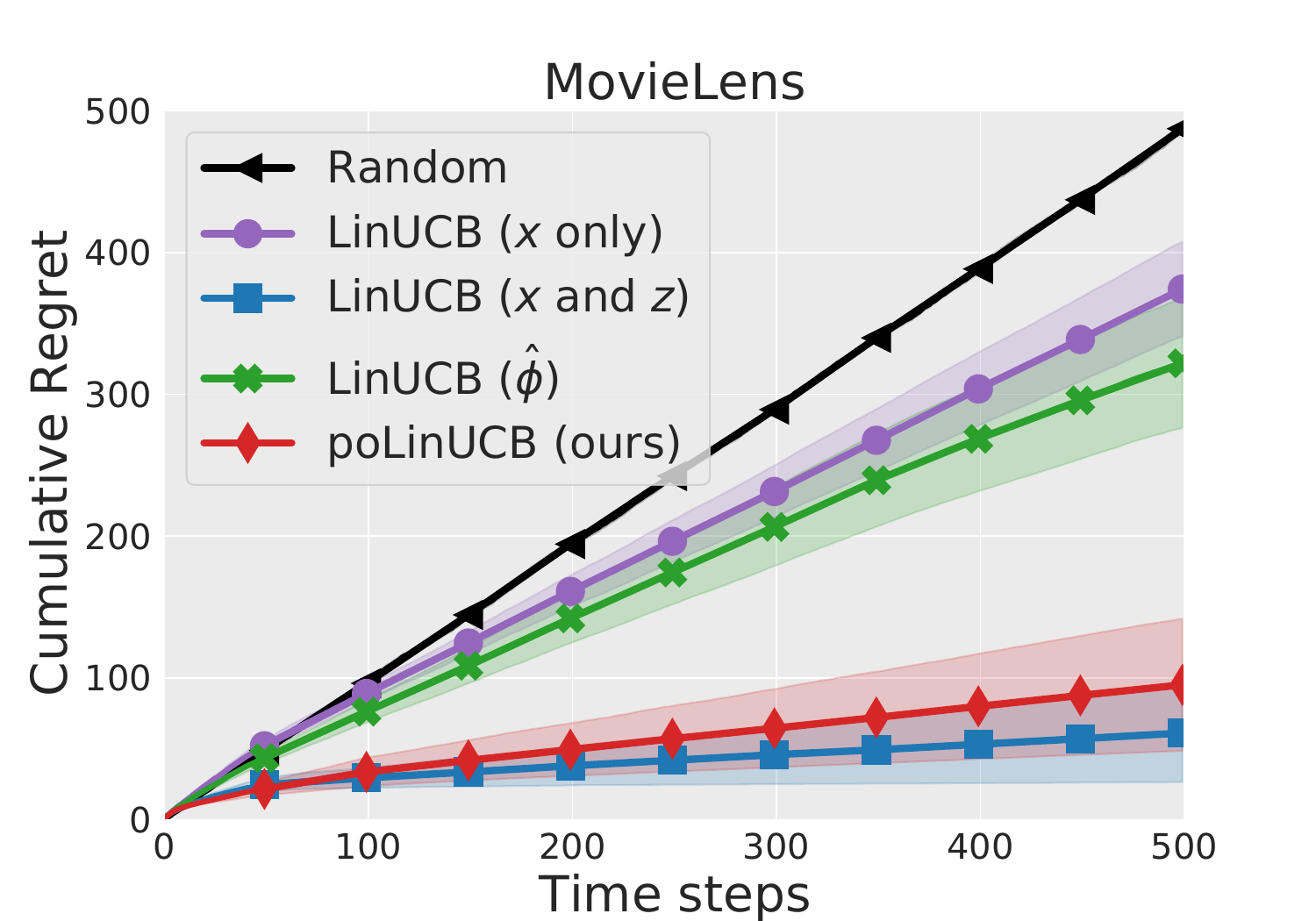} %
  \caption{Results on MovieLens.}
  \vspace{-0.5cm}
  \label{fig:movie-lens}
\end{wrapfigure}
\vspace{-0.2cm}
\paragraph{Results and Discussion.} The experimental results, presented in Figure~\ref{fig:movie-lens}, demonstrate the effectiveness of our proposed algorithm. The overall pattern is similar to it observed in our synthetic experiments. Our proposed policy consistently outperforms the other strategies~(except for LinUCB with both pre-serving and post-serving features). Significantly, our algorithm yields superior performance compared to policies operating solely on the pre-serving context, thereby demonstrating its effectiveness in leveraging the post-serving information.

\section{Conclusions and Limitations}
\revision{
\noindent \textbf{Conlusions.} In this work, we have introduced a novel contextual bandit framework that incorporates post-serving contexts, thereby widening the range of complex real-world challenges it can address. By leveraging historical data, our proposed algorithm, \alg{}, estimates the functional mapping from pre-serving to post-serving contexts, leading to improved online learning efficiency. For the purpose of theoretical analysis, the elliptical potential lemma has been expanded to manage noise within post-serving contexts, a development which may have wider applicability beyond this particular framework. Extensive empirical tests on synthetic and real-world datasets have demonstrated the significant benefits of utilizing post-serving contexts and the superior performance of our algorithm compared to state-of-the-art approaches. 

\noindent \textbf{Limitations.} %
Our theoretical analysis hinges on a crucial assumption that the function $\phi^\star(\cdot)$ is learnable, which may not always be satisfied. This is particularly a concern  when the post-serving contexts may hold additional information that cannot be deduced from the pre-serving context, irrespective of the amount of data collected. In such scenarios, the function mapping from the pre-serving context to the post-serving context may be much more difficult to learn, or even not learnable. Consequently, a linear regret may be inevitable due to model misspecification. However, from a practical point of view, our empirical findings from the real-world MovieLens dataset demonstrate that modeling the functional relationship between the pre-serving and post-serving contexts can still significantly enhance the learning efficiency. We hope our  approaches can inform the design of  practical algorithms that more effectively utilizes post-serving data in  recommendations.  

\noindent \textbf{Acknowledgement.} This work is supported
by an NSF Award CCF-2132506, an Army Research Office Award W911NF-23-1-0030, and an Office of Naval Research Award   N00014-23-1-2802. 
}

\if 0
\section{Conclusion}
In this work, we   introduced a novel contextual bandit framework that incorporates post-serving contexts, thereby widening the range of complex real-world challenges it can address. By leveraging historical data, our proposed algorithm, \alg{}, estimates the functional mapping from pre-serving to post-serving contexts, leading to improved online learning efficiency. For the purpose of theoretical analysis, we generalized the elliptical potential lemma  to accommodate noise of post-serving context estimation, a development which may have wider applicability beyond this particular framework. Extensive empirical tests on both synthetic and real-world datasets  demonstrated the  benefits of utilizing post-serving contexts and the superior performance of our algorithm compared to state-of-the-art algorithms. 
\fi

\clearpage
\bibliographystyle{plainnat}
\bibliography{ref-bandits,ref-bias}

\newpage
\appendix
\normalsize
\setcounter{footnote}{0}

\begin{appendix}

\if 0
\section{Extended Discussions}

\subsection{Conclusions and Limitations}
\paragraph{Conlusions.} In this work, we have introduced a novel contextual bandit framework that incorporates post-serving contexts, thereby widening the range of complex real-world challenges it can address. By leveraging historical data, our proposed algorithm, \alg{}, estimates the functional mapping from pre-serving to post-serving contexts, leading to improved online learning efficiency. For the purpose of theoretical analysis, the elliptical potential lemma has been expanded to manage noise within post-serving contexts, a development which may have wider applicability beyond this particular framework. Extensive empirical tests on synthetic and real-world datasets have demonstrated the significant benefits of utilizing post-serving contexts and the superior performance of our algorithm compared to state-of-the-art approaches. 

\paragraph{Limitations.} %
Our theoretical analysis hinges on the assumption that the function $\phi^\star(\cdot)$ is learnable, a condition that may not always be feasible. This especially applies to settings where the post-serving contexts may hold additional information that cannot be deduced from the pre-serving context, irrespective of the amount of data collected. In such scenarios, no function mapping from the pre-serving context to the post-serving context will satisfy the learnability assumption. Consequently, a linear regret is inevitable, and it will be proportional to the level of misspecification. However, from a practical point of view, our empirical findings from the real-world MovieLens dataset demonstrate that modeling the functional relationship between the pre-serving and post-serving contexts can significantly enhance recommendation quality.

\subsection{Related Works}\label{sec:extended-related-works}
\paragraph{Contextual bandits. } 
The literature on (generalized) linear (contextual) bandits is extensive, as evidenced by numerous studies~\citep{abe2003reinforcement,auer2002using,dani2008stochastic,rusmevichientong2010linearly,lu2010contextual, filippi2010parametric,li2010contextual,chu2011contextual,abbasi2011improved, li2017provably, jun2017scalable}. These approaches predominantly employ upper confidence bounds as a means of balancing exploration and exploitation, leading to the attainment of minimax optimal regret bounds. The derivation of these regret bounds principally hinges on the utilization of confidence ellipsoids and the elliptical potential lemma. All these works assume that the contextual information governing the payoff is full observable. In contrast, our work focuses on scenarios where the context is not completely observable during arm selection, thereby presenting additional complexities in managing the partially available information.
\vspace{-0.25cm}
\paragraph{Contextual bandits with partial information. } Contextual bandits with partial information has been relatively limited in the literature. Initial progress in this area was made by \citet{wang2016learning}, who studied settings with hidden contexts. In their setup there is   some context (the post-serving context in our model) that can never by observed by the learner, whereas in our setup the learner can observe post-serving context but only after pulling the arm.  Under the  assumption that if the parameter initialization is extremely close to the true optimal parameter, then they develop  a sub-linear regret algorithm. Our algorithm does not need such strong assumption on parameter initialization and we also   show that their approach may perform poorly in our setup. %
Subsequent research by \citet{qi2018bandit, yang2020contextual, park2021analysis, yang2021robust, zhu2022robust} investigated scenarios with noisy or unobservable contexts. In these studies, the learning algorithm was designed to predict context information online through context history analysis, or selectively request context data from an external expert. Our work, on the other hand, introduces a novel problem setting that separates contexts into pre-serving and post-serving categories, enabling the exploration of a wide range of problems with varying learnability. Additionally, we also need to employ new techniques for analyzing our problem to get a near-optimal regret bound.
\vspace{-0.25cm}
\paragraph{Generalizing the elliptical potential lemma~(EPL). } The EPL, introduced by \citet{lai1982least}, serves as a critical component in quantifying the rate at which uncertainty decreases with the addition of more observations. Initially employed in the analysis of stochastic linear regression, the EPL has since been extensively utilized in stochastic linear bandit problems~\citep{auer2002using,dani2008stochastic,chu2011contextual,abbasi2011improved,li2019nearly,zhou2020neural,wang2022linear}. Researchers have also proposed various generalizations of the EPL to accommodate diverse assumptions and problems. For example, \cite{carpentier2020elliptical}  extended the EPL by allowing for the use of the $ \mX_t^{-p}$-norm, as opposed to the traditional $ \mX_t^{-1}$-norm. Meanwhile, \cite{hamidi2022elliptical} investigated a generalized form of the $1 \wedge\left\|\varphi(\vx_t)\right\|_{\mX_{t-1}^{-1}}^2$ term, which was inspired by the pursuit of variance reduction in non-Gaussian linear regression models. However, existing (generalized) EPLs are inadequate for the analysis presented herein. %
\fi

\section[Constructing Confidence Sets (\textit{with the Predicted Context Generating Function})]{Algorithm and Regret Analysis of LinUCB({\(\widehat{\phi}\)})}

We present the details of the algorithm described in Section~\ref{sec:preliminary-analysis} and the proof of the regret bound. 

\subsection{Main Algorithm}

\paragraph{Parameter learning.} We consider solving the following regularized least squared problem for estimating $\{\widehat{\vtheta}_{t, a}\}_{a\in\calA}$ and $\{\widehat{\vbeta}_{t, a}\}_{a\in \calA}$ for each arm $a$:
\begin{align}\label{eqn:phi-least-squares}
    \ell_t (\vtheta_a, \vbeta_a) = \sum_{s: a_s = a}^t  \left(r_{s, a} - \vx_t^\top \vtheta_{a} - \textcolor{red}{\widehat{\phi}_{s}(\vx_s)}^\top \vbeta_{a}\right)^2+\lambda \left( \|\vtheta_a\|_2^2 +  \|\vbeta_a\|_2^2 \right),
\end{align}
where $\lambda  \geq 0$ are penalty factors ensuring the uniqueness of minimizers $\widehat{\vtheta}_{t, a}$ and $\widehat{\vbeta}_{t, a}$.

In the same convention, we use $\vw$ to denote $\left(\vtheta, \vbeta\right)$, and $\vu$ to denote $\left(\vx, \textcolor{red}{\widehat{\phi}(\vx)}\right)$. The closed-form solutions for $\widehat{\vtheta}_{t, a}$ and $\widehat{\vbeta}_{t, a}$ in this least squared problem then become: 
\begin{align*}
    \widehat{\vw}_{t, a} \coloneqq \begin{bmatrix}
        \widehat{\vtheta}_{t, a} \\
        \widehat{\vbeta}_{t, a}
    \end{bmatrix}= \mA_{t, a}^{-1} \vb_{t, a},
\end{align*}
where we reload the notations of $\mA_{t,a}$ and $\vb_{t, a}$,
\begin{align}
    \mA_{t, a} = \lambda \mI + \sum_{s: a_s = a}^t \vu_{s}\vu_s^\top
    \quad \text{and}\quad 
    \vb_{t, a} = \sum_{s: a_s = a}^t r_{s, a} \vu_s. \label{eq:new-estimator-hat}
\end{align}

\textbf{Confidence set construction. } At iteration $t$, we construct the confidence set for $\widehat{\phi}_t(\vx_t)$ by
\begin{align}\label{eqn:w-confidence-set-hat}
    \calC_{t}\left(\widehat{\phi}_t, \vx_t\right) \coloneqq \left\{\vz \in \R^d: \left\|\widehat{\phi}_t(\vx_t) - \phi^\star(\vx_t)\right\|_2 \leq e^\delta_t\right\}.
\end{align}

The construction of the confidence set for $\widehat{\vw}_{t,a}$ will be different, as we are using the predicted value $\widehat{\phi}_{t}(\cdot)$ for linear regression. Consider the following,
\begin{align*}
    \mA_{t, a}\left( \begin{bmatrix}
        \widehat{\vtheta}_{t, a} \\ 
        \widehat{\vbeta}_{t, a}
    \end{bmatrix} -  \begin{bmatrix}
        \vtheta_{a}^\star \\ 
        \vbeta_{a}^\star
    \end{bmatrix}\right) = &\underbrace{\sum_{s: a_s=a}^t \left(\vepsilon^\top_s \vbeta_{a}^\star \begin{bmatrix}
        \vx_s \\ 
        \widehat{\phi}_s(\vx_s)
    \end{bmatrix}\right) }_{\circled{1}}
    + \underbrace{\sum_{s: a_s = a}^t \left(\left(\phi^\star(\vx_s) - \widehat{\phi}_s(\vx_s)\right)^\top \vbeta_{a}^\star \begin{bmatrix}
        \vx_s \\ 
        \widehat{\phi}_s(\vx_s)
    \end{bmatrix}\right)}_{\circled{2}} \notag \\
    & + \underbrace{\sum_{s: a_s = a}^t \eta_s \begin{bmatrix}
        \vx_s \\ 
        \widehat{\phi}_s(\vx_s)
    \end{bmatrix} }_{\circled{3}}
     - \underbrace{\lambda \begin{bmatrix}
        \vtheta^\star_a \\ 
         \vbeta^\star_a
    \end{bmatrix}}_{\circled{4}}
\end{align*}

 Therefore, the confidence set will be enlarged due to the error introduced by  $\widehat{\phi}_{t}(\cdot)$. In the below, we derive the confidence set. To build the confidence set, we need to bound 
\begin{align*}
   \|\widehat{\vw}_{t,a} - \vw_{t,a}^\star\|_{\mA_{t, a}} = \left \|\begin{bmatrix}
        \widehat{\vtheta}_{t, a} \\ 
        \widehat{\vbeta}_{t, a}
    \end{bmatrix} -  \begin{bmatrix}
        \vtheta_{a}^\star \\ 
        \vbeta_{a}^\star
    \end{bmatrix}\right\|_{\mA_{t, a}} = \left\|\circled{1}+\circled{2}+\circled{3}+\circled{4}\right\|_{\mA_{t, a}^{-1}}
\end{align*}
Since both $\{\vepsilon_s\}_{s=1}^t$ and $\{\eta_s\}_{s=1}^t$ are i.i.d sub-Gaussian random variables, respectively, we can use the self-normalized inequality to bound the corresponding terms, i.e., the followings hold with probability at least $1-2\delta$,
\begin{align*}
     \|\widehat{\vw}_{t,a} - \vw_{t,a}^\star\|_{\mA_{t, a}} &\leq \sqrt{2(L_{\epsilon}^2+R_{\eta}^2) \log\left(\frac{\det(\mA_{t, a})^{1/2} \det(\lambda \mI)^{-1/2}}{\delta/2}\right)} +  \frac{L_u}{\sqrt{\lambda}} \left(\sum_{s=1}^t e^{\delta/t}_s\right) + 2\sqrt{\lambda} \notag \\
     &\leq \sqrt{2(L_{\epsilon}^2+R_{\eta}^2) \log\left(\frac{1+n_t(a)L_{u}^2/\lambda}{\delta/2}\right)} +  \frac{L_u}{\sqrt{\lambda}} \left(\sum_{s=1}^t e^{\delta/t}_s\right) + 2\sqrt{\lambda}
\end{align*}

\begin{algorithm}[t]
  \caption{ LinUCB-$(\widehat{\phi})$  (\emph{Linear UCB adapted from \citet{wang2016learning} with post-serving contexts; The differences with Algorithm~\ref{alg:ucb-toy} are highlighted in {\color{navyblue}blue} color.}) %
  }
  \label{alg:ucb-phi-hat}
\begin{algorithmic}[1]
    \For{$t=0, 1, \ldots, T$}
    \State Receive the \emph{pre-serving context} $\vx _{t}$
    \State Compute the optimistic parameters by maximizing the UCB objective $${\left(a_t, {\widetilde{\phi}_{t}(\vx_t)}, \widetilde{\vw}_t\right) = {\underset{(a, \phi, \vw_a) \in [K] \times \calC_{t-1}\left(\widehat{\phi}_{t-1}, \vx_t\right) \times \calC_{t-1}(\widehat{\vw}_{t-1,a})  }{\argmax}} { \begin{bmatrix} \vx_t \\ \phi(\vx_t) \end{bmatrix}^\top \vw_{a} }}.$$ 
    \State Play the arm $a_t$ and receive the realized \emph{post-serving context} as $\vz_t$ and  the real-valued  reward $$r_{t, a_t} =   \begin{bmatrix} \vx_t \\ \vz_t \end{bmatrix}^\top \vw_{a_t}^{\star}  + \eta_t. $$
  \State {\color{navyblue}Compute the estimated post-serving context generating function $\widehat{\phi}_t(\cdot)$ using ERM. }
  \State {\color{navyblue} Compute $\widehat{\vw}_{t,a}$ by solving Equation~\ref{eqn:phi-least-squares} for each $a$.}
    \State {\color{navyblue} Update confidence sets $\calC_t(\widehat{\vw}_{t, a})$ and $\calC_t(\widehat{\phi}_t, \vx_t)$ for each $a$ based on Equations~\ref{eqn:w-confidence-set-hat} and \ref{eqn:phi-confidence-set}.}
    \EndFor
\end{algorithmic}
\end{algorithm}

Therefore, the confidence set is
\begin{align}
    \calC_{t, a}(\widehat{\vw}_{t,a}) = \left\{ \vw \in \mathbb{R}^{d_u} : \|\vw - \widehat{\vw}_{t, a}\|_{\mA_{t, a}} \leq \zeta_{t,a} \right\},
\end{align}
where $\zeta_{t,a}= \sqrt{2(L_{\epsilon}^2+R_{\eta}^2) \log\left((1+n_t(a)L_{u}^2/\lambda)/(\delta/2)\right)} +  {L_u} \left(\sum_{s=1}^t e^{\delta/t}_s\right)/{\sqrt{\lambda}} + 2\sqrt{\lambda}$. In comparison to the original confidence set, there is one additional term due to the generalization error introduced from $\widehat{\phi}_s(\cdot)$. In the next section, we will provide a regret analysis, which following from the proof of LinUCB~\citep{li2010contextual}. We simply have the following
\begin{align*}
    &R_T = \sum_{t=1}^T \left( r_{t, a_t^\star} - r_{t, a_t}\right) =\sum_{t=1}^T \Delta_t \leq \sqrt{T\sum_{t=1}^T \Delta_t^2}\\
    &\leq \sqrt{T\sum_{t=1}^T \left( \left\|\widetilde{\phi_t}(\vx_t) - \phi^\star(\vx_t)\right\|\left\| \widetilde{\vbeta}_{a_t} \right\|  + {\left\| \begin{bmatrix}
        \vx_t \\
        \phi^\star(\vx_t)
    \end{bmatrix} \right\|_{\mA_{t-1, a_t}^{-1}}}{\left\|\begin{bmatrix}
        \widetilde{\vtheta}_{a_t} - \vtheta_{a_t}  \\
        \widetilde{\vbeta}_{a_t} - \vbeta_{a_t}
    \end{bmatrix}\right\|_{\mA_{t-1, a_t}}} \right)^2}  \\
    &\leq \sqrt{T\left(\sum_{t=1}^T  2\left\|\widetilde{\phi_t}(\vx_t) - \phi^\star(\vx_t)\right\|^2\left\| \widetilde{\vbeta}_{a_t} \right\|^2  + 2\zeta_{T,a}^2\left(1\land{\left\| \begin{bmatrix}
        \vx_t \\
        \phi^\star(\vx_t)
    \end{bmatrix} \right\|_{\mA_{t-1, a_t}^{-1}}^2}  \right)  \right) }   \\
    &\leq \sqrt{T\left(\sum_{t=1}^T  2\left\|\widetilde{\phi_t}(\vx_t) - \phi^\star(\vx_t)\right\|^2\left\| \widetilde{\vbeta}_{a_t} \right\|^2  + 2\zeta_{T,a}^2\left(1\land{\left\| \begin{bmatrix}
        \vx_t \\
        \phi^\star(\vx_t)
    \end{bmatrix} \right\|_{\mA_{t-1, a_t}^{-1}}^2}  \right)  \right) }  \\
    &\leq 
    \sqrt{T\cdot 
    \left(
        8C_0T^{1- 2\alpha}
        \log^{ 2\alpha}\left(\frac{\det \mX_t}{\det \mX_0}\right)\log^2\left(\frac{T}{\delta}\right) 
        + 2K\zeta_T^2 d_u \log\left(1+\frac{T L_u^2}{\lambda d_u}\right) 
    \right)}
\end{align*}

In the next, we expand the term $\zeta_{T}^2$, 
\begin{align*}
    \zeta_{T}^2 &\leq  \left(\sqrt{2(L_{\epsilon}^2+R_{\eta}^2) \log\left(\frac{1+TL_{u}^2/\lambda}{\delta/2}\right)} +  \frac{{L_u} \left(\sum_{s=1}^T e^{\delta/T}_s\right)}{\sqrt{\lambda}} + 2\sqrt{\lambda}\right)^2 \\
    &\leq 6(L_{\epsilon}^2+R_{\eta}^2) \log\left(\frac{1+TL_{u}^2/\lambda}{\delta/2}\right) + \frac{3L_u^2}{\lambda}\left(\sum_{s=1}^T e^{\delta/T}_s\right)^2 + 12\lambda 
\end{align*}

In the next, we bound the second term in the above equation under the learnability assumption~\ref{assump:learnabilityOnPhiAdvNew},
\begin{align*}
    \left(\sum_{s=1}^T e^{\delta/T}_s\right)^2 \leq 16T^{2-2\alpha}\log^{2\alpha}\left(\frac{\det \mX_T}{\det \mX_0}\right)\log^2\left(\frac{T}{\delta}\right).
\end{align*}

Therefore, naively choosing the value of $\lambda$ will lead to a linear regret due to the term $T^{3/2-\alpha}$ in the equation. To minimize the upper bound, we can choose the value of $\lambda$ to be
\begin{align*}
    \lambda = 2L_u T^{1-\alpha} \log^\alpha\left( \frac{\det \mX_T}{\det \mX_0} \right)\log\left(\frac{T}{\delta}\right).
\end{align*}
Then, we can bound $\zeta_T^2$ by
\begin{align*}
    \zeta_T^2 \leq   6(L_{\epsilon}^2+R_{\eta}^2) \log\left(\frac{1+TL_{u}^2/\lambda}{\delta/2}\right) + 48L_u T^{1-\alpha} \log^\alpha\left( \frac{\det \mX_T}{\det \mX_0} \right)\log\left(\frac{T}{\delta}\right).
\end{align*}
By plugging it in and following the simplication as in the proof of Theorem~\ref{thm:regret-pLinUCB}, we can get the regret is upper bounded by
\begin{align*}
   \widetilde{\mathcal{O}}\left(  T^{1-\alpha}d_u^\alpha + T^{1-\alpha/2}\sqrt{K d_u^{1+\alpha}} \right).
\end{align*}
The above result is summarized as the following proposition.
\thmRegretPLINUCBPhi*

\section{Missing Proofs}\label{appendix:proofs}

\subsection{Missing Proofs in the Generalized Elliptical Potential Lemma}\label{appendix:gepl}

\lemGEPL*

\begin{proof}
Our proof follows the high level idea for proving the  original EPL. However, to accommodate the noises in the data matrix, we have to introduce new matrix concentration tools to the original (primarily algebraic) proof,  and also identify the right conditions for the argument to go through. A key lemma to our proof is the following high probability bound regarding the noisy data matrix:   

\begin{restatable}[]{lem}{lemRobustElliptical}\label{lem:noisy-matrix-psd}
Let $\vx_1, ..., \vx_T \in \R^d$ be a fixed sequence of vectors, and $\vepsilon_1, ..., \vepsilon_T \in \R^d$ are independent random variables satisfying $\max_{t} \|\vx_t\|_2 \leq L_x$,  $\max_t\|\vepsilon_t\|_2 \leq L_{\epsilon}$, and $\expect[\vepsilon_t \vepsilon_t^\top] \succcurlyeq \sigma_\epsilon^2 \mI$. %
Then the following hold with probability at least $ 1- 2d \exp\left(\frac{-T\sigma_{\epsilon}^4}{8L_{\epsilon}^2 (L_{\epsilon} + L_x)^2}\right)$, 
\begin{align*}
    \sum_{t=1}^T (\vx_t + \vepsilon_t)(\vx_t + \vepsilon_t)^\top \succcurlyeq \sum_{t=1}^T  \vx_t \vx_t^\top.
\end{align*}
\end{restatable}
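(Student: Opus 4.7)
The plan is to recast the desired matrix inequality as a matrix concentration statement and then apply the matrix Bernstein inequality for self-adjoint matrices. Specifically, I would start by expanding
\[
  (\vx_t + \vepsilon_t)(\vx_t + \vepsilon_t)^\top - \vx_t \vx_t^\top \;=\; \mY_t \;\coloneqq\; \vx_t \vepsilon_t^\top + \vepsilon_t \vx_t^\top + \vepsilon_t \vepsilon_t^\top,
\]
which is a symmetric random matrix with mean $\expect[\mY_t] = \expect[\vepsilon_t \vepsilon_t^\top] \succcurlyeq \sigma_\epsilon^2 \mI$ by assumption. Hence $\sum_{t=1}^T \expect[\mY_t] \succcurlyeq T \sigma_\epsilon^2 \mI$. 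The target inequality is $\sum_t \mY_t \succcurlyeq 0$, so defining the centered summands $\mZ_t = \mY_t - \expect[\mY_t]$, it suffices to show $\|\sum_t \mZ_t\|_{\mathrm{op}} \leq T\sigma_\epsilon^2$ with the claimed probability.

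Next I would verify the two inputs required by matrix Bernstein. For the almost-sure bound, the triangle inequality combined with the hypotheses $\|\vx_t\| \leq L_x$ and $\|\vepsilon_t\| \leq L_\epsilon$ gives
\[
  \|\mY_t\|_{\mathrm{op}} \leq 2L_x L_\epsilon + L_\epsilon^2, \qquad \|\expect[\mY_t]\|_{\mathrm{op}} \leq L_\epsilon^2,
\]
so $\|\mZ_t\|_{\mathrm{op}} \leq 2L_\epsilon(L_\epsilon + L_x) \eqqcolon R$. For the matrix variance proxy, I would bound $\|\expect[\mZ_t^2]\|_{\mathrm{op}} \leq \expect\|\mZ_t\|_{\mathrm{op}}^2 \leq R^2$ using the fact that $\|\expect M\|_{\mathrm{op}} \leq \expect\|M\|_{\mathrm{op}}$ for PSD matrices $M = \mZ_t^2$, giving $v = \|\sum_t \expect[\mZ_t^2]\|_{\mathrm{op}} \leq 4 T L_\epsilon^2(L_\epsilon + L_x)^2$.

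Plugging $R$ and $v$ into matrix Bernstein at deviation level $s = T\sigma_\epsilon^2$ yields
\[
  \Pr\!\left[\Big\|\sum_{t=1}^T \mZ_t\Big\|_{\mathrm{op}} \geq T\sigma_\epsilon^2\right] \;\leq\; 2d \exp\!\left(\frac{-T^2 \sigma_\epsilon^4 / 2}{v + R T\sigma_\epsilon^2 / 3}\right).
\]
Because $\sigma_\epsilon^2 \leq L_\epsilon^2 \leq L_\epsilon(L_\epsilon+L_x)$ (which follows from $\expect[\vepsilon_t\vepsilon_t^\top] \preccurlyeq L_\epsilon^2 \mI$), the linear term $RT\sigma_\epsilon^2/3$ is absorbed into the variance term up to a constant, leaving a bound of the stated form $2d\exp\!\big(-T\sigma_\epsilon^4/(8 L_\epsilon^2(L_\epsilon+L_x)^2)\big)$ after tracking constants.

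On the complement of this event, $\sum_t \mZ_t \succcurlyeq - T\sigma_\epsilon^2 \mI$, so
\[
  \sum_{t=1}^T (\vx_t + \vepsilon_t)(\vx_t+\vepsilon_t)^\top - \sum_{t=1}^T \vx_t \vx_t^\top \;=\; \sum_{t=1}^T \mZ_t + \sum_{t=1}^T \expect[\mY_t] \;\succcurlyeq\; -T\sigma_\epsilon^2 \mI + T\sigma_\epsilon^2 \mI \;\succcurlyeq\; 0,
\]
which is the claim. The main technical obstacle I anticipate is getting the constants to match exactly: the variance proxy $\|\expect[\mZ_t^2]\|_{\mathrm{op}}$ could in principle be tightened by directly expanding the four cross terms of $\mZ_t^2$ and using $\expect[\vepsilon_t] = 0$ to kill some pieces, but the cruder bound $R^2$ already suffices for the stated order; the other subtlety is ensuring that the sub-exponential tail in Bernstein is dominated by the sub-Gaussian tail at the chosen deviation $s = T\sigma_\epsilon^2$, which is exactly where the inequality $\sigma_\epsilon^2 \leq L_\epsilon(L_\epsilon + L_x)$ is needed.
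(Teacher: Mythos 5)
Your proposal is correct and follows essentially the same route as the paper: the same decomposition into $\vx_t\vepsilon_t^\top + \vepsilon_t\vx_t^\top + \vepsilon_t\vepsilon_t^\top$, centering by the noise covariance, an application of matrix Bernstein at deviation level $T\sigma_\epsilon^2$, and the same use of $\sigma_\epsilon \leq L_\epsilon$ to simplify the tail; the only differences are cosmetic (you center by $\expect[\mY_t]$ and bound the variance via $R^2$ rather than via $\|\mS_t\|^2$, which changes nothing of substance). The small constant-tracking slack you flag at the end is present in the paper's own derivation as well and does not affect the stated form of the bound.
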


{ The Proof of Lemma~\ref{lem:noisy-matrix-psd} employs the Bernstein's Inequality for matrices \citep{tropp2015introduction}, which is technical; for   ease of presentation, we defer its proof of Appendix \ref{appendix:gepl}. By Lemma~\ref{lem:noisy-matrix-psd},  we have that, for every  $t \in [T]$,  the following inequality holds   with  probability at least  $1- 2d \exp\left(\frac{-t\sigma_{\epsilon}^4}{8L_{\epsilon}^2 (L_{\epsilon} + L_x)^2}\right)$:  }
\begin{align*}
    \widetilde{\mX}_t = \mX_0 + \sum_{s=1}^t (\vx_s + \vepsilon_s)(\vx_s + \vepsilon_s)^\top \succcurlyeq \mX_0 + \sum_{s=1}^t \vx_s\vx_s^\top \coloneqq {\mX}_t,
\end{align*}
under which we have $$\|\vx_{t+1}\|_{\widetilde{\mX}^{-1}_{t}}^2  \leq \|\vx_{t+1}\|_{{\mX}^{-1}_{t}}^2.$$ 

To prove our Lemma \ref{lem:gepl}, we need to apply union bound to guarantee the above hold simultaneously for every $t \geq 1$ with high probability. Unfortunately, this turns out to not be true   because when $t$ is very small (e.g., $t=1$), the above inequality cannot hold with high probability. Therefore, to obtain high-probability guarantee by the union bound, we will have to exclude these small $t$'s and apply the union bound for only the   events from some $t' \in [T]$, as follows 
\begin{align*}
    &\prob\left[ \forall t \in [t', T],\;\;  \|\vx_t\|_{\widetilde{\mX}^{-1}_{t-1}}^2  \leq \|\vx_t\|_{{\mX}^{-1}_{t-1}}^2 \right] \\
    &\geq 1 - \sum_{t=t'-1}^{T-1} 2d\exp\left(\frac{-t \sigma_\epsilon^4}{(8 L_\epsilon^2 (L_\epsilon + L_x)^2)}\right) \\
    &\geq 1 - \sum_{t=t'-1}^{\infty} 2d\exp\left(\frac{-t \sigma_\epsilon^4}{(8 L_\epsilon^2 (L_\epsilon + L_x)^2)}\right) \\
    & =  1 - 2d\left( \frac{\exp\big(-(t'-1) \sigma_\epsilon^4/(8 L_\epsilon^2 (L_\epsilon + L_x)^2) \big)}{1-\exp\big( -\sigma_\epsilon^4/(8 L_\epsilon^2 (L_\epsilon + L_x)^2) \big)} \right). \\
    & \geq 1 - 2d \times   \exp \big(-(t'-1) \sigma_\epsilon^4/(8 L_\epsilon^2 (L_\epsilon + L_x)^2) \big)  \times \frac{ 16 L_\epsilon^2 (L_\epsilon + L_x)^2}{ \sigma_\epsilon^4},  
\end{align*}
where the last inequality uses the fact that $\sigma_\epsilon^4/ (L_\epsilon^2 (L_\epsilon + L_x)^2) \leq (\sigma_\epsilon/ L_\epsilon)^4 \leq 1$ and  $1 - e^{-x} \geq x/2$ for any $x \in [0, 1]$.  By solving the following inequality,
\begin{align*}
 \exp \big(-(t'-1) \sigma_\epsilon^4/(8 L_\epsilon^2 (L_\epsilon + L_x)^2) \big)  \times \frac{ 16 L_\epsilon^2 (L_\epsilon + L_x)^2}{ \sigma_\epsilon^4} \leq \frac{\delta}{2d}, 
\end{align*}
we have,
\begin{align*}
   t' & \geq 1 +  \frac{  8 L_\epsilon^2 (L_\epsilon + L_x)^2 }{\sigma_{\epsilon}^4} \log\left(\frac{32d L_\epsilon^2 (L_\epsilon + L_x)^2 }{\delta \sigma_{\epsilon}^4}\right) 
\end{align*}
Let $T_0$ denote the ceiling of the right-hand-side of the above term. 
Therefore, we have the following hold with high probability at least $1 - \delta$: 
\begin{align*}
     \sum_{t=1}^T \left(1 \land \|\vx_t\|_{\widetilde{\mX}^{-1}_{t-1}}^2\right)^p &\leq  (T_0 - 1) + \sum_{t=T_0}^T \left(1 \land \|\vx_t\|_{\widetilde{\mX}^{-1}_{t-1}}^2\right)^p\\
     &\leq (T_0 - 1) + \sum_{t=T_0}^T \left(1 \land \|\vx_t\|_{{\mX}^{-1}_{t-1}}^2\right)^p \\
     &\leq (T_0 - 1) + \sum_{t=T_0}^T \left(1 \land \|\vx_t\|_{{\mX}^{-1}_{t-1}}^2\right)^p 
\end{align*}

In the next, we are going to bound the second term in the above equation, whose proof can be adapted from the proof of the original elliptical potential lemma. Using the fact that for any $z \in [0, +\infty]$, $z \land 1 \leq 2\ln(1+z)$, we get
\begin{align*}
    \sum_{t=1}^T 1 \land \left(\|\vx_t\|_{{\mX}_{t-1}^{-1}}^2\right)^{p} \leq \sum_{t=1}^T \left(2 \log\left( 1 + \|\vx_t\|_{{\mX}_{t-1}^{-1}}^2 \right)\right)^{p}.
\end{align*}
Additionally, by definition, we have
\begin{align*}
    {\mX}_{t} = {\mX}_{t-1} + \vx_{t}\vx_t^\top = \mX_{t-1}^{1/2}\left(\mI + \mX_{t-1}^{-1/2}\vx_t \vx_t^\top \mX_{t-1}^{-1/2}\right)\mX_{t-1}^{1/2}.
\end{align*}
This implies the following relationship between the determinant,
\begin{align*}
    \det \mX_{t} = \det\left(\mX_{t-1}\right)\det\left(\mI + \mX_{t-1}^{-1/2}\vx_t\vx_t^\top \mX_{t-1}^{-1/2}\right).
\end{align*}
Since the only eigenvalues of a matrix of the form $\mI + \vy\vy^\top$ are $1+\|\vy\|_2$ and $1$, we have
\begin{align*}
     \log\left( 1 + \|\vx_t\|_{\mX_{t-1}^{-1}}^2 \right)= \log\det\mX_{t} - \log\det\mX_{t-1}.
\end{align*}
By taking the power $p$ for both sides and taking the sum, we have
\begin{align*}
     \sum_{t=1}^T \left(\log\left( 1 + \|\vx_t\|_{\mX_{t-1}^{-1}}^2 \right)\right)^{p}= \sum_{t=1}^T\left( \log\det \mX_t - \log\det \mX_{t-1}\right)^{p}.
\end{align*}
Since $p \in [0, 1]$, the function $g(x) = x^{p}$ is a concave function. Thus, we have
\begin{align*}
    \frac{1}{T}\sum_{t=1}^T\left( \log\det \mX_t - \log\det \mX_{t-1}\right)^{p} &\leq \left(\frac{1}{T}\sum_{t=1}^T \log\det \mX_t - \log\det \mX_{t-1}\right)^{p} =\frac{1}{T^{p}}\log^{p}\left(\frac{\det\mX_T}{\det\mX_0}\right).
\end{align*}
Therefore, we can conclude that
\begin{align*}
     \sum_{t=1}^T 1 \land \left(\|\vx_t\|^2_{\mX_{t-1}^{-1}}\right)^p \leq 2^p T^{1-p}\log^{p}\left(\frac{\det\mX_T}{\det\mX_0}\right).
\end{align*}
By combining the above results, we have the following hold with probability at least $1-\delta$: 
\begin{align*}
    \sum_{t=1}^T \left(1 \land \|\vx_t\|_{\widetilde{\mX}^{-1}_{t-1}}^2\right)^p \leq T_0 - 1  +  \left(\sum_{t=1}^T 1 \land \|\vx_t\|_{{\mX}^{-1}_{t-1}}^2\right)^p
    \leq T_0 - 1 + 2^p T^{1-p}\log^{p}\left(\frac{\det\mX_T}{\det\mX_0}\right).
\end{align*}

Invoking $$T_0 - 1 \leq \frac{  8 L_\epsilon^2 (L_\epsilon + L_x)^2 }{\sigma_{\epsilon}^4} \log\left(\frac{32d L_\epsilon^2 (L_\epsilon + L_x)^2 }{\delta \sigma_{\epsilon}^4}\right)  ,$$ we obtained the desired inequality with probability at least $1-\delta$: 
\begin{align*}
   \sum_{t=1}^T \left(1 \land \|\vx_t\|_{\widetilde{\mX}^{-1}_{t-1}}^2\right)^p \leq 2^p T^{1-p}\log^p\left(\frac{\det \mX_T}{\det\mX_0}\right) +  \frac{  8 L_\epsilon^2 (L_\epsilon + L_x)^2 }{\sigma_{\epsilon}^4} \log\left(\frac{32d L_\epsilon^2 (L_\epsilon + L_x)^2 }{\delta \sigma_{\epsilon}^4}\right)  .
\end{align*}

\end{proof}

\lemRobustElliptical*
\begin{proof}
 We will analyze the term-wise difference, denoted as 
    \begin{align}
        \mS_t \coloneqq (\vx_t + \vepsilon_t)(\vx_t + \vepsilon_t)^\top  -  \vx_t \vx_t^\top = \vepsilon_t \vx_t^\top +\vx_t\vepsilon_t^\top + \vepsilon_t\vepsilon_t^\top
    \end{align}
Moreover, since  $ \expect[\vepsilon_t] = 0 $ for any $t \in [T]$, the expectation of $\mS_t$ (over randomness of noise) can be lower bounded as
    \begin{align*}
        \expect[\mS_t] = \expect[\vepsilon_t] \vx_t^\top +\vx_t \expect[\vepsilon_t^\top] + \expect[ \vepsilon_t\vepsilon_t^\top] =  \expect[\vepsilon_t \vepsilon_t^\top] \succcurlyeq \sigma_\epsilon^2 \mI.
    \end{align*}
    Since $\|\vx_t\|_2 \leq L_x$ and $\|\vepsilon_t\|_2 \leq L_{\epsilon}$, we know that $ \mS_t \preccurlyeq (2L_{\epsilon} L_x + L_{\epsilon}^2) \mI$ with probability $1$.  Thus $\mS_t$ is uniformly upper bounded under the spectral-norm denoted by $\|  \cdot \|$, or formally 
    \begin{align*}
       \|  \mS_t \|  \leq  2L_{\epsilon} L_x + L_{\epsilon}^2   .
    \end{align*}
Consider the ``centered'' matrix sum $ \mZ_T = \sum_{t=1}^T \bigg[ \mS_t - \expect[\vepsilon_t\vepsilon_t^\top] \bigg]$, with mean $0$. {Since the spectral-norm of $\mS_t$ is upper bounded by $2L_{\epsilon} L_x + L_{\epsilon}^2$, its variance $ \sV(\mS_t) = \left\|\expect\left[\mS_t\mS_t^\top\right]\right\|_2$  is   upper bounded by $(2L_{\epsilon} L_x + L_{\epsilon}^2)^2$. Thus, the variance of $\mZ_T$ equals  the variance of sum $\sum_{t=1}^T \mS_t$, which is then upper bounded by $T(2L_{\epsilon} L_x + L_{\epsilon}^2)^2$. 
}  By the Bernstein's Inequality for random matrices~\citep{tropp2015introduction}, we have the following high probability upper bound for the spectral norm $\| \cdot \|$ of $\mZ_T$: 
\begin{align}\label{eqn:matrix-bernstein-iota}
    \mathbb{P}\left[\|\mZ_T\|  \geq  \iota \right] \leq 2d \exp\left(\frac{-\iota^2/2}{\sV(\mZ_T) + (2L_{\epsilon} L_x + L_{\epsilon}^2)\iota/3}\right).
\end{align}
Since $\mZ_T$ is a symmetric matrix, its spectral norm upper bounds the  absolute value of any eigenvalue. Thus we can lower bound the smallest eigenvalues as follows: 
\begin{align*}
    \sP\left[\|\mZ_T\|  \geq  \iota \right] &\geq  \sP\left[ \lambda_{\min}\left(\sum_{t=1}^T \mS_t - \expect[\vepsilon_t\vepsilon_t^\top]\right) \leq - \iota \right]\\
    &\geq  \sP\left[ \lambda_{\min}\left(\sum_{t=1}^T \mS_t - \sigma_\epsilon^2 \mI \right) \leq - \iota \right]\\
    & = \sP\left[ \lambda_{\min}\left(\sum_{t=1}^T \mS_t\right) \leq T\sigma^2_{\epsilon} - \iota \right].
\end{align*}
where the second inequality is due to two facts: (1) $A \succcurlyeq B$ implies $\lambda_{\min}(A) \geq \lambda_{\min} (B)$ where $A = \left(\sum_{t=1}^T \mS_t - \sigma_\epsilon^2 \mI \right)$ and $B = \left(\sum_{t=1}^T \mS_t - \expect[\vepsilon_t\vepsilon_t^\top] \right) $; and (2) the event $\lambda_{\min} (A) \leq - \iota $ thus is included within the event $\lambda_{\min} (B) \leq - \iota $.  Consequently, we have 
\begin{align*}
    \sP\left[ \lambda_{\min}\left(\sum_{t=1}^T \mS_t\right) \leq T\sigma^2_{\epsilon} - \iota \right] \leq 2d \exp\left(\frac{-\iota^2/2}{\sV(\mZ_T) + (2L_{\epsilon} L_x + L_{\epsilon}^2)\iota/3}\right),
\end{align*}
or equivalently,
\begin{align*}
    \sP\left[ \lambda_{\min}\left(\sum_{t=1}^T \mS_t\right) \geq T\sigma^2_{\epsilon} - \iota \right] \geq 1- 2d \exp\left(\frac{-\iota^2/2}{\sV(\mZ_T) + (2L_{\epsilon} L_x + L_{\epsilon}^2)\iota/3}\right),
\end{align*}
By choosing the value of $\iota = T\sigma^2_{\epsilon}$, we get
\begin{align*}
     &\sP\left[ \lambda_{\min}\left(\sum_{t=1}^T \mS_t\right) \geq 0 \right] \\
     &\geq 1- 2d \exp\left(\frac{-T^2\sigma_{\epsilon}^4/2}{\sV(\mZ_T) + (2L_{\epsilon} L_x + L_{\epsilon}^2)T\sigma^2_{\epsilon}/3}\right)\\
     &\geq  1- 2d \exp\left(\frac{-T^2\sigma_{\epsilon}^4/2}{T(2L_{\epsilon} L_x + L_{\epsilon}^2)^2 + (2L_{\epsilon} L_x + L_{\epsilon}^2)T\sigma^2_{\epsilon}/3}\right) . 
\end{align*}
Using the fact that $\sigma_\epsilon \leq L_\epsilon$,  we can further simplify the above equation by 
\begin{align*}
\sP\left[ \lambda_{\min}\left(\sum_{t=1}^T \mS_t\right) \geq 0 \right] \geq  1- 2d \exp\left(\frac{-T\sigma_{\epsilon}^4}{8L_{\epsilon}^2 (L_{\epsilon} + L_x)^2}\right) . 
\end{align*}
\end{proof}

\subsection{Missing Proofs in the Regret Analysis}\label{appendix:regret}

\thmRegretPLINUCB*

\begin{proof}
    In the next, we prove the regret bound. For each time step $t$, the immediate regret is
\begin{align*}
    \Delta_t &=r_{t, a^\star_t} - r_{t,a_t} \\
    & = \left\langle \begin{bmatrix}
        \vx_t \\
        \phi^\star(\vx_t)
    \end{bmatrix}, \begin{bmatrix}
        \vtheta_{a^\star_t} - \vtheta_{a_t}^\star  \\
        \vbeta_{a^\star_t} - \vbeta_{a_t}^\star
    \end{bmatrix} \right\rangle  \\
    & \stackrel{\mathrm{(a)}}{\leq} \left\langle \begin{bmatrix}
        \vx_t \\
        \widetilde{\phi_t}(\vx_t)
    \end{bmatrix}, \begin{bmatrix}
        \widetilde{\vtheta}_{a_t} - \vtheta_{a_t}^\star  \\
        \widetilde{\vbeta}_{a_t} - \vbeta_{a_t}^\star
    \end{bmatrix} \right\rangle  + \left\langle \widetilde{\phi}_t(\vx_t)-\phi^\star(\vx_t),  \vbeta_{a_t}^\star\right\rangle \\
    & = \left\langle \begin{bmatrix}
        \mathbf{0} \\
        \widetilde{\phi_t}(\vx_t)-\phi^\star(\vx_t)
    \end{bmatrix} +  \begin{bmatrix}
        \vx_t \\
        \phi^\star(\vx_t)
    \end{bmatrix}, \begin{bmatrix}
        \widetilde{\vtheta}_{a_t} - \vtheta_{a_t}^\star  \\
        \widetilde{\vbeta}_{a_t} - \vbeta_{a_t}^\star
    \end{bmatrix} \right\rangle   + \left\langle \widetilde{\phi}_t(\vx_t)-\phi^\star(\vx_t),  \vbeta_{a_t}^\star\right\rangle\\
    & = \left\langle 
        \widetilde{\phi_t}(\vx_t)-\phi^\star(\vx_t)
 ,  \widetilde{\vbeta}_{a_t} \right\rangle +  \left\langle\begin{bmatrix}
        \vx_t \\
        \phi^\star(\vx_t)
    \end{bmatrix}, \begin{bmatrix}
        \widetilde{\vtheta}_{a_t} - \vtheta_{a_t}^\star \\
        \widetilde{\vbeta}_{a_t} - \vbeta_{a_t}^\star
    \end{bmatrix} \right\rangle  \\
    &\stackrel{\mathrm{(b)}}{\leq} \left\|\widetilde{\phi_t}(\vx_t) - \phi^\star(\vx_t)\right\|\cdot\left\| \widetilde{\vbeta}_{a_t} \right\|  + \left\langle \begin{bmatrix}
        \vx_t \\
        \phi^\star(\vx_t)
    \end{bmatrix}, \begin{bmatrix}
        \widetilde{\vtheta}_{a_t} - \vtheta_{a_t}^\star  \\
        \widetilde{\vbeta}_{a_t} - \vbeta_{a_t}^\star
    \end{bmatrix} \right\rangle  \\
    &\stackrel{\mathrm{(c)}}{\leq}  \left\|\widetilde{\phi_t}(\vx_t) - \phi^\star(\vx_t)\right\|\cdot\left\| \widetilde{\vbeta}_{a_t} \right\|  + {\left\| \begin{bmatrix}
        \vx_t \\
        \phi^\star(\vx_t)
    \end{bmatrix} \right\|_{\mA_{t-1, a_t}^{-1}}}{\left\|\begin{bmatrix}
        \widetilde{\vtheta}_{a_t} - \vtheta_{a_t}^\star  \\
        \widetilde{\vbeta}_{a_t} - \vbeta_{a_t}^\star
    \end{bmatrix}\right\|_{\mA_{t-1, a_t}}}.
\end{align*}

where the inequality (a) is due to the definition of UCB, and (b) and (c) are obtained using the Cauchy-Schwarz inequality. Therefore, the cumulative regret can be further upper bounded by 
\begin{align*}
    &R_T =\sum_{t=1}^T \Delta_t \leq \sqrt{T\sum_{t=1}^T \Delta_t^2}\\
    &\leq \sqrt{T\sum_{t=1}^T \left( \left\|\widetilde{\phi_t}(\vx_t) - \phi^\star(\vx_t)\right\|\left\| \widetilde{\vbeta}_{a_t}\right\|  + {\left\| \begin{bmatrix}
        \vx_t \\
        \phi^\star(\vx_t)
    \end{bmatrix} \right\|_{\mA_{t-1, a_t}^{-1}}}{\left\|\begin{bmatrix}
        \widetilde{\vtheta}_{a_t} - \vtheta_{a_t}  \\
        \widetilde{\vbeta}_{a_t} - \vbeta_{a_t}
    \end{bmatrix}\right\|_{\mA_{t-1, a_t}}} \right)^2}  \\
    &\leq \sqrt{T\left(\sum_{t=1}^T  2\left\|\widetilde{\phi_t}(\vx_t) - \phi^\star(\vx_t)\right\|^2\left\| \widetilde{\vbeta}_{a_t}\right\|^2  + 2\zeta_T^2\left(1\land{\left\| \begin{bmatrix}
        \vx_t \\
        \phi^\star(\vx_t)
    \end{bmatrix} \right\|_{\mA_{t-1, a_t}^{-1}}^2}  \right)  \right) }   \\
    &\leq \sqrt{T\left(\sum_{t=1}^T  2\left\|\widetilde{\phi_t}(\vx_t) - \phi^\star(\vx_t)\right\|^2\left\| \widetilde{\vbeta}_{a_t} \right\|^2  + 2\zeta_T^2\left(1\land{\left\| \begin{bmatrix}
        \vx_t \\
        \phi^\star(\vx_t)
    \end{bmatrix} \right\|_{\mA_{t-1, a_t}^{-1}}^2}  \right)  \right) }  
\end{align*}
In the next, we bound each term separately. Firstly, we have the following hold with probability at least $1-\delta$ by using the union bound,
\begin{align*}
    \sum_{t=1}^T  2\left\|\widetilde{\phi_t}(\vx_t) - \phi^\star(\vx_t)\right\|^2\cdot\left\| \widetilde{\vbeta}_{a_t} \right\|^2 \leq 8\sum_{t=1}^T \left(e^{\delta/T}_t\right)^2.
\end{align*}
In the next, to bound the remaining term, we use the result from Lemma~\ref{lem:gepl}. We first group the sums based the arm,
\begin{align}\label{eqn:epl-bound}
    \sum_{t=1}^T  1\land{\left\| 
    \begin{bmatrix}
        \vx_t \\
        \phi^\star(\vx_t)
    \end{bmatrix} 
    \right\|_{\mA_{t-1, a_t}^{-1}}^2} &= \sum_{a\in \calA} \sum_{t\in[T]: a_t = a}  1\land{\left\| \begin{bmatrix}
        \vx_t \\
        \phi^\star(\vx_t)
    \end{bmatrix} \right\|_{\mA_{t-1, a}^{-1}}^2}
\end{align}
By denoting $n_T(a)$ as the number of times that arm $a$ is pulled, we  can divide the arms into two groups,
\begin{align*}
    \calG_{0} \coloneqq \left\{a\in \calA: n_T(a) = \Omega(\log(1/\delta)) \right\} \quad \textnormal{and} \quad \calG_{1} \coloneqq \calA \setminus \calG_{0}.
\end{align*}

Then, we can further decompose the r.h.s term of Equation~\ref{eqn:epl-bound} based on if the corresponding arm is in $\mathcal{G}_0$ or $\mathcal{G}_1$. Then, by applying Lemma~\ref{lem:gepl}, we have the following holds, with probability at least $1-\delta$,
\begin{align*}
   & \sum_{a\in \calA} \sum_{t\in[T]: a_t = a}  1\land{\left\| \begin{bmatrix}
        \vx_t \\
        \phi^\star(\vx_t)
    \end{bmatrix} \right\|_{\mA_{t-1, a}^{-1}}^2} \\
    =& \sum_{a \in \calG_0 } \sum_{t\in[T]: a_t = a}  1\land{\left\| \begin{bmatrix}
        \vx_t \\
        \phi^\star(\vx_t)
    \end{bmatrix} \right\|_{\mA_{t-1, a}^{-1}}^2} + \sum_{a \in \calG_1 } \sum_{t\in[T]: a_t = a}  1\land{\left\| \begin{bmatrix}
        \vx_t \\
        \phi^\star(\vx_t)
    \end{bmatrix} \right\|_{\mA_{t-1, a}^{-1}}^2} \\
    \leq& {2Kd_u\log\left(1 + \frac{TL_u^2}{\lambda d_u}\right) +  \frac{  8 KL_\epsilon^2 (L_\epsilon + L_x)^2 }{\sigma_{\epsilon}^4} \log\left(\frac{32Kd_u L_\epsilon^2 (L_\epsilon + L_x)^2 }{\delta \sigma_{\epsilon}^4}\right) } .
\end{align*}
where the last inequality is due to Lemma~\ref{lem:gepl} and apply the union bound on the $K$ arms. To bound the remainder term, since $\alpha \in [0, 1/2]$, by the learnability assumption as stated in Assumption~\ref{assump:learnabilityOnPhiAdvNew} and Lemma~\ref{lem:gepl}, we have
\begin{align*}
    \sum_{t=1}^T \left(e^{\delta/t}_t\right)^2 &\leq \sum_{t=1}^T C_0^2 \cdot \left(1 \land \|\vx\|^2_{{\mX^{-1}_{t-1}}}\right)^{2\alpha} \cdot \log^2\left(\frac{tT}{\delta}\right) \\
    &\leq 4C_0^2T^{1- 2\alpha}\log^{ 2\alpha}\left(\frac{\det\mX_T}{\det\mX_0}\right)\log^2\left(\frac{T}{\delta}\right) \\
    &\leq 4C_0^2T^{1- 2\alpha}d_u^{2\alpha}\log^{ 2\alpha}\left(\frac{TL_u^2/d+\lambda}{\lambda}\right)\log^2\left(\frac{T}{\delta}\right) 
\end{align*}

Therefore, the total regret bound is bounded by the following term with probability at least $1-2\delta$,
{\small
\begin{align*}
    \sqrt{T\cdot 
    \left(
        8C_0T^{1- 2\alpha}
        \log^{ 2\alpha}\left(\frac{\det \mX_T}{\det \mX_0}\right)\log^2\left(\frac{T}{\delta}\right) 
        + K\zeta_T^2\left( d_u \log\left(1+\frac{T L_u^2}{\lambda d_u}\right) 
    + \frac{48 L_{\epsilon}^4L_u^2}{\sigma_{\epsilon}^4} \log\left(\frac{192Kd_uL_\epsilon^4 L_u^2}{\delta \sigma_\epsilon^4}\right)\right) 
    \right)}
\end{align*}}

By hiding the logarithmic terms, we can further simplify it to be
\begin{align*}
     \widetilde{\mathcal{O}}\left(T^{1- \alpha}d_u^{ \alpha} + d_u\sqrt{T K }\right)
\end{align*}
\end{proof}

\subsection{Missing Proofs in Generalizations}\label{appendix:extensitions}

\propActionDependentContextsRegret*

 \begin{proof}
     Our proof follows from the proof of Theorem~\ref{thm:regret-pLinUCB}. The immediate regret at each time step $t$ is
     \begin{align}
         \Delta_t = r_{t, a_t^\star} - r_{t, a_t}
     \end{align}
     Recall that the definition of $a^\star_t$,
     \begin{align}
         a_t^\star \coloneqq \argmax_{a \in \calA} \langle \vtheta_{a}^\star, \vx_{t, a} \rangle + \langle \vbeta_{a}^\star, \phi_{a}^\star(\vx_{t, a}) \rangle.
     \end{align}
     To bound the immediate regret, we have
     \begin{align}
           \Delta_t = \langle\vtheta_{a^\star_t}^\star, \vx_{t, a^\star_t} \rangle + \langle \vbeta_{a^\star_t}^\star, \phi_{a^\star_t}^\star(\vx_{t, a^\star_t}) \rangle - \langle\vtheta_{a_t}^\star, \vx_{t, a_t} \rangle - \langle \vbeta_{a_t}^\star, \phi_{a_t}^\star(\vx_{t, a_t}) \rangle.
     \end{align}
     By the definition of UCB, we further have
     \begin{align}
         \Delta_t \leq  \langle \widetilde{\vtheta}_{t, a_t}, \vx_{t, a_t} \rangle + \langle \widetilde{\vbeta}_{t, a_t}, \widetilde{\phi}_{t, a_t}^\star(\vx_{t, a_t}) \rangle - \langle\vtheta_{a_t}^\star, \vx_{t, a_t} \rangle - \langle \vbeta_{a_t}^\star, \phi_{a_t}^\star(\vx_{t, a_t}) \rangle.
     \end{align}
     By rearranging the terms, we get
     \begin{align}
         \Delta_t \leq \underbrace{\langle \widetilde{\vtheta}_{t, a_t} - \vtheta_{a_t}^\star, \vx_{t, a_t} \rangle + \langle \widetilde{\vbeta}_{t, a_t} - \vbeta_{a_t}^\star, {\phi}_{a_t}^\star(\vx_{t, a_t}) \rangle}_{\circled{1}} + \underbrace{\langle \widetilde{\vbeta}_{t, a_t}, \widetilde{\phi}_{t, a_t}(\vx_{t, a_t}) - \phi_{a_t}^\star(\vx_{t, a_t}) \rangle}_{\circled{2}} 
     \end{align}
     Bounding the first term $\circled{1}$ is the same as the proof in Theorem~\ref{thm:regret-pLinUCB}, while bounding the second term $\circled{2}$ will be slightly different, as we now have $K$ such functions of $\vphi_{a}^\star(\cdot)$ for $a \in \calA$ to learn. By denoting the error for each estimate of ${\phi}^{\star}_{a}(\cdot)$ at iteration $t$ as $e^{\delta}_{t, a}$. Therefore, the contribution from the second term to the total regret can be bounded by
     \begin{align}
         \sum_{t=1}^T \left(e^{\delta/t}_{t, a_t}\right)^2 &= \sum_{a \in \calA} \sum_{t\in [T]: a_t = a} \left(e^{t/\delta}_{t, a}\right)^2 \\
         &\leq 4KC_0T^{1-2\alpha}\log^{2\alpha}\left(\frac{\det\mX_T}{\det\mX_0}\right)\log^2\left(\frac{T}{\delta}\right). 
     \end{align}
     Hence, by following the remaining steps in the proof of Theorem~\ref{thm:regret-pLinUCB}, we can conclude that the regret is upper bounded by
     \begin{align}
          \widetilde{\mathcal{O}}\left(T^{1-\alpha}d_u^{\alpha}\sqrt{K} + d_u\sqrt{T K }\right),
     \end{align}
     where the only difference is the additional $\sqrt{K}$ appeared in the first term.
 \end{proof}

\propAbbasiRegret*

 \begin{proof}
     This proof also follows from the proof of Theorem~\ref{thm:regret-pLinUCB}. The immediate regret at each time step $t$ is
     \begin{align}
         \Delta_t = r_{t, \vx_t^\star} - r_{t, \vx_t}
     \end{align}
     Recall that the definition of $\vx^\star_t$,
     \begin{align}
         \vx_t^\star \coloneqq \argmax_{\vx \in D_t} \langle \vtheta^\star, \vx \rangle + \langle \vbeta^\star, \phi^\star(\vx) \rangle.
     \end{align}     
     To bound the immediate regret, we have
     \begin{align}
           \Delta_t = \langle \vtheta^\star, \vx^\star_t \rangle + \langle \vbeta^\star, \phi^\star(\vx_t^\star) \rangle - \langle \vtheta^\star, \vx_t \rangle - \langle \vbeta^\star, \phi^\star(\vx_t) \rangle
     \end{align}
     By the definition of UCB, we further have
     \begin{align}
         \Delta_t \leq  \langle \widetilde{\vtheta}_{t}, \vx_{t} \rangle + \langle \widetilde{\vbeta}_{t}, \widetilde{\phi}_{t}^\star(\vx_{t}) \rangle - \langle\vtheta^\star, \vx_{t} \rangle - \langle \vbeta^\star, \phi^\star(\vx_{t}) \rangle.
     \end{align}
     By rearranging the terms, we get
     \begin{align}
         \Delta_t \leq \underbrace{\langle \widetilde{\vtheta}_{t} - \vtheta^\star, \vx_{t} \rangle + \langle \widetilde{\vbeta}_{t} - \vbeta^\star, {\phi}^\star(\vx_{t}) \rangle}_{\circled{1}} + \underbrace{\langle \widetilde{\vbeta}_{t}, \widetilde{\phi}_{t}(\vx_{t}) - \phi^\star(\vx_{t}) \rangle}_{\circled{2}} 
     \end{align}
     Since we only need to fit  a single $\vtheta^\star$, $\vbeta^\star$ and $\phi^\star(\cdot)$. We thus have the following bound for the total regret,
     \begin{align}
          \widetilde{\mathcal{O}}\left(T^{1-\alpha}d_u^{\alpha} + d_u\sqrt{T}\right).
     \end{align}
 \end{proof}

\section{Technical Lemmas}
\begin{restatable}[\textbf{Confidence Ellipsoid}, based on Theorem 2 of~\citep{abbasi2011improved}]{lem}{thm-conf-2}\label{thm:conf-2}
Let $\vw^{\star} \in \sR^d$, $\mV_0 = \lambda \mI $, $\lambda > 0$. For any $t \geq 0$, let $\vu_1, \cdots, \vu_t \in \sR ^{d}$, define $r_t = \innerproduct{\vu_t}{\vw ^{\star}} + \eta_t$ where $\eta_t$ is $R_{\eta}$-sub-Gaussian and assume that $\| \vw ^{\star} \|_2 \leq L_{\vw}$; let $\mV_t = \mV_0 + \sum_{s=1}^{t}\vu_s \vu_s^{\top}$ and $\hat{\vw}_t$ be the corresponding regularised least-square estimator. Then, for any $\delta > 0$ and $t \geq 0$, with probability at least $1 - \delta$, $\vw^{\star}$ lies in the set:
\begin{align}
    \calC_t=\left\{\vw \in \mathbb{R}^d:\left\|\hat{\vw}_t-\vw\right\|_{\mV_t} \leq \sqrt{\lambda} L_{\vw} + R_{\eta} \sqrt{2 \log \left(\frac{\operatorname{det}\left(\mV_t\right)^{1 / 2} \operatorname{det}(\lambda \mI)^{-1 / 2}}{\delta}\right)}\right\}.
\end{align}
Furthermore, if for all $t \geq 1$, $\| \vu_t \| \leq L_{\vu}$, then for any $\delta > 0$ and $t \geq 0$, with probability at least $1 - \delta$, $\vw^{\star}$ lies in the set:
\begin{align}
    \calC_t=\left\{\vw \in \mathbb{R}^d:\left\|\hat{\vw}_t-\vw\right\|_{\mV_t} \leq \sqrt{\lambda} L_{\vw} + R_{\eta} \sqrt{d \log \left(\frac{1 + tL_{\vu}^2 / \lambda}{\delta}\right)}\right\}.
\end{align}
\end{restatable}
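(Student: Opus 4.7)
The plan is to follow the classical self-normalized martingale route pioneered by \citet{abbasi2011improved} and reduce the statement to two standard building blocks: (i) a closed-form decomposition of the regularized least-squares error, and (ii) a self-normalized tail bound for vector-valued sub-Gaussian martingales. Because the result is essentially a restatement of Theorem~2 of \citet{abbasi2011improved}, I would borrow their self-normalized inequality as a black box and focus on cleanly executing the algebra.

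First, I would write out the regularized least-squares estimator in closed form, $\hat{\vw}_t = \mV_t^{-1}\sum_{s=1}^t r_s \vu_s$, and substitute $r_s = \langle \vu_s,\vw^\star\rangle + \eta_s$. Using $\sum_{s=1}^t \vu_s\vu_s^\top = \mV_t - \lambda\mI$, this yields the identity
\begin{equation*}
    \hat{\vw}_t - \vw^\star \;=\; \mV_t^{-1}\Bigl(\underbrace{\sum_{s=1}^t \eta_s \vu_s}_{=:\, \vS_t} \;-\; \lambda \vw^\star\Bigr).
\end{equation*}
Taking the $\mV_t$-norm of both sides and applying the triangle inequality gives
\begin{equation*}
    \|\hat{\vw}_t - \vw^\star\|_{\mV_t} \;\leq\; \|\vS_t\|_{\mV_t^{-1}} + \lambda\,\|\vw^\star\|_{\mV_t^{-1}}.
\end{equation*}
The regularization term is easy: since $\mV_t \succeq \lambda\mI$, we have $\mV_t^{-1} \preceq \lambda^{-1}\mI$, so $\lambda\|\vw^\star\|_{\mV_t^{-1}} \leq \sqrt{\lambda}\,\|\vw^\star\|_2 \leq \sqrt{\lambda}\,L_{\vw}$.

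The main obstacle, and the only nontrivial probabilistic step, is controlling the martingale term $\|\vS_t\|_{\mV_t^{-1}}$. Here the covariates $\vu_s$ may depend on the past, so a naive union bound over $t$ is too lossy. Instead I would invoke the self-normalized inequality for vector-valued sub-Gaussian martingales (Theorem~1 of \citet{abbasi2011improved}), which gives, simultaneously for all $t \geq 0$ and with probability at least $1-\delta$,
\begin{equation*}
    \|\vS_t\|_{\mV_t^{-1}}^2 \;\leq\; 2R_\eta^2\,\log\!\left(\frac{\det(\mV_t)^{1/2}\det(\lambda\mI)^{-1/2}}{\delta}\right).
\end{equation*}
Substituting this back produces the first displayed confidence ellipsoid. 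The proof of this self-normalized bound itself relies on the method of mixtures with a Gaussian prior to build a supermartingale; I would cite it rather than reprove it.

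Finally, for the second (uniform-norm) bound, I would convert the determinant term into an explicit function of $t$ and $L_{\vu}$ via the determinant--trace inequality: when $\|\vu_s\|_2 \leq L_{\vu}$ for all $s$, the trace of $\mV_t$ is at most $d\lambda + tL_{\vu}^2$, and the AM-GM inequality on the eigenvalues of $\mV_t$ gives $\det(\mV_t) \leq (\lambda + tL_{\vu}^2/d)^d$. Dividing by $\det(\lambda\mI) = \lambda^d$ yields $\det(\mV_t)/\det(\lambda\mI) \leq (1 + tL_{\vu}^2/(\lambda d))^d$, and plugging this into the first bound produces the stated $\sqrt{d\log((1+tL_{\vu}^2/\lambda)/\delta)}$ radius (up to absorbing the $1/d$ inside the logarithm, which only tightens the bound). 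The only delicate point is the simultaneous-in-$t$ nature of the self-normalized inequality; once that is in hand, the rest is straightforward linear algebra.
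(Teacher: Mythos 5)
Your proposal is correct and follows exactly the standard argument underlying Theorem 2 of \citet{abbasi2011improved}: the closed-form error decomposition, the $\sqrt{\lambda}L_{\vw}$ bound on the regularization term via $\mV_t^{-1}\preceq\lambda^{-1}\mI$, the self-normalized martingale inequality for $\|\vS_t\|_{\mV_t^{-1}}$, and the determinant--trace inequality for the explicit second form. The paper imports this lemma by citation and gives no proof of its own, so your sketch is consistent with (and more detailed than) what the paper provides.
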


\begin{restatable}[Bernstein's Inequality for Matrices, Theorem 6.1.1 of \citep{tropp2015introduction}]{lem}{lem-bern-matrix}\label{prop:ber-matrix} 
Let $\mX_1, \cdots, \mX_n \in \sR^{d_1 \times d_2}$ be independent and centered random matrices. Assume that for each $i \in [n]$, $\mX_i$ is \emph{uniformly bounded}, that is:
\begin{align}
    \sE \left[ \mX_i \right] = \vzero \text{\quad and \quad} \| \mX_i \| \leq B,
\end{align}
where $\| \cdot \|$ denotes the spectral-norm distance here. Introduce the sum 
\begin{align}
    \mZ = \sum_{i=1}^n \mX_i,
\end{align}
and let $\sV(\mZ)$ denote the matrix variance statistics of the sum $\mZ$:
\begin{align}
    \sV(\mZ) &= \max \left\{ \|\sE \left[ \mZ \mZ ^{*}\right]\|, \| \sE\left[ \mZ^{*} \mZ \right] \| \right\}  \\
    &= \max \left\{\left\| \sum_{i=1}^n \mX_i \mX_i^{*} \right\|, \left\| \sum_{i=1}^n \mX_i ^{*} \mX_i \right\| \right\},
\end{align}
where the asterisk $^{*}$ denotes the conjugate transpose operation. Then, for every $\epsilon \geq 0$, we have,
\begin{align}
    {\sP \Bigl( \| \mZ \| \geq \epsilon \Bigl) \leq (d_1 + d_2) \cdot \exp \left( \frac{-\epsilon^2 / 2}{\sV(\mZ) + B\epsilon/3} \right)}.
\end{align}
\end{restatable}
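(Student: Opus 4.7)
The plan is to follow the classical proof of Tropp's Matrix Bernstein inequality via the matrix Laplace transform method combined with Lieb's concavity theorem. First I would reduce the rectangular case ($d_1 \times d_2$) to the Hermitian case via the Hermitian dilation $\mathcal{H}(\mX) = \begin{pmatrix} \vzero & \mX \\ \mX^{*} & \vzero \end{pmatrix} \in \sR^{(d_1+d_2) \times (d_1+d_2)}$. The key facts I would invoke are: (i) $\mathcal{H}$ is linear so $\mathcal{H}(\mZ) = \sum_i \mathcal{H}(\mX_i)$ is a sum of independent, centered, self-adjoint matrices each bounded by $B$ in spectral norm; (ii) $\lambda_{\max}(\mathcal{H}(\mX)) = \|\mX\|$; and (iii) $\mathcal{H}(\mX)^2 = \mathrm{diag}(\mX\mX^{*},\mX^{*}\mX)$, so the variance proxy $\|\sE[\mathcal{H}(\mZ)^2]\|$ equals exactly the $\sV(\mZ)$ defined in the statement. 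This reduces the problem to proving a one-sided tail bound on $\lambda_{\max}$ of a sum of bounded, centered, Hermitian matrices.

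Next I would set up the Laplace transform / Chernoff-style argument in the matrix setting: for any $\theta>0$,
\begin{align*}
\sP\bigl(\lambda_{\max}(\mathcal{H}(\mZ)) \geq \epsilon \bigr) \;\leq\; e^{-\theta\epsilon}\, \sE\bigl[\,\trace\exp(\theta \mathcal{H}(\mZ))\,\bigr].
\end{align*}
The crucial step is to bound the matrix moment generating function of the sum by the sum of the cumulant generating functions of the summands. I would invoke Lieb's concavity theorem, which yields the subadditivity inequality
\begin{align*}
\sE\bigl[\trace\exp(\theta \mathcal{H}(\mZ))\bigr] \;\leq\; \trace\exp\!\left(\sum_{i=1}^n \log \sE\bigl[e^{\theta \mathcal{H}(\mX_i)}\bigr]\right).
\end{align*}
This is the main obstacle of the proof, and it is exactly where the matrix setting genuinely departs from the scalar one; I would treat Lieb's theorem as a black box (as Tropp does) rather than reproving it.

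Then I would control the per-term cumulant $\log\sE[e^{\theta \mathcal{H}(\mX_i)}]$. Using the power series $e^x = 1 + x + x^2 g(x)$ with $g(x) = \sum_{k\geq 2} x^{k-2}/k!$, the boundedness $\|\mathcal{H}(\mX_i)\|\leq B$, and the operator-monotonicity of $\log$ near $\mI$, I would obtain the standard Bernstein-type semidefinite bound
\begin{align*}
\log \sE\bigl[e^{\theta \mathcal{H}(\mX_i)}\bigr] \;\preccurlyeq\; \frac{\theta^2/2}{1 - B\theta/3}\, \sE\bigl[\mathcal{H}(\mX_i)^2\bigr], \qquad 0<\theta<3/B.
\end{align*}

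Finally I would combine these pieces: using $\trace \exp(\mA) \leq (d_1+d_2)\,\lambda_{\max}(\exp(\mA)) = (d_1+d_2)\exp(\lambda_{\max}(\mA))$ together with monotonicity of $\lambda_{\max}$ under $\preccurlyeq$, the tail bound becomes
\begin{align*}
\sP\bigl(\|\mZ\|\geq \epsilon\bigr) \;\leq\; (d_1+d_2)\exp\!\left(-\theta\epsilon + \frac{\theta^2/2}{1-B\theta/3}\sV(\mZ)\right).
\end{align*}
Optimizing over $\theta$ by choosing $\theta = \epsilon/(\sV(\mZ) + B\epsilon/3)$ (a standard Bernstein optimization) yields the claimed bound $(d_1+d_2)\exp(-\epsilon^2/2 / (\sV(\mZ) + B\epsilon/3))$. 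To also cover $\lambda_{\max}(-\mathcal{H}(\mZ))$ (i.e., the lower tail of the eigenvalues), I would apply the same argument to $-\mathcal{H}(\mX_i)$; since $\|\mathcal{H}(\mZ)\| = \max(\lambda_{\max}(\mathcal{H}(\mZ)),\lambda_{\max}(-\mathcal{H}(\mZ)))$, a union bound would normally give a factor $2$, but this is absorbed because the dilation already symmetrizes the spectrum of $\mathcal{H}(\mZ)$ (its eigenvalues come in $\pm$ pairs), so no extra factor is incurred and the prefactor $d_1+d_2$ is tight.
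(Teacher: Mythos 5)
The paper does not prove this lemma at all; it is imported verbatim as Theorem 6.1.1 of Tropp's monograph and used as a black-box technical tool. Your outline correctly reconstructs the canonical proof from that source (Hermitian dilation, matrix Laplace transform, Lieb-based subadditivity of the matrix cumulant generating function, the Bernstein moment bound, and the standard optimization of $\theta$), including the correct observation that the dilation's symmetric spectrum makes a separate lower-tail union bound unnecessary, so there is nothing to compare against and no gap to report.
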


\newpage
\section{Additional Results and Discussions}

\begin{figure}[t]
    \centering
    \includegraphics[width=0.6\textwidth]{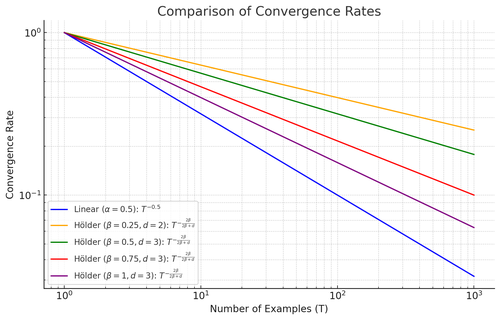}
    \caption{Comparison of convergence under different $\phi$ functions (e.g., linear and those in Holder space).}
    \label{fig:visual-illustraion-holder}
    \vspace{-0.3cm}
\end{figure}

\begin{figure}[t]
    \centering
    \includegraphics[width=0.475\textwidth]{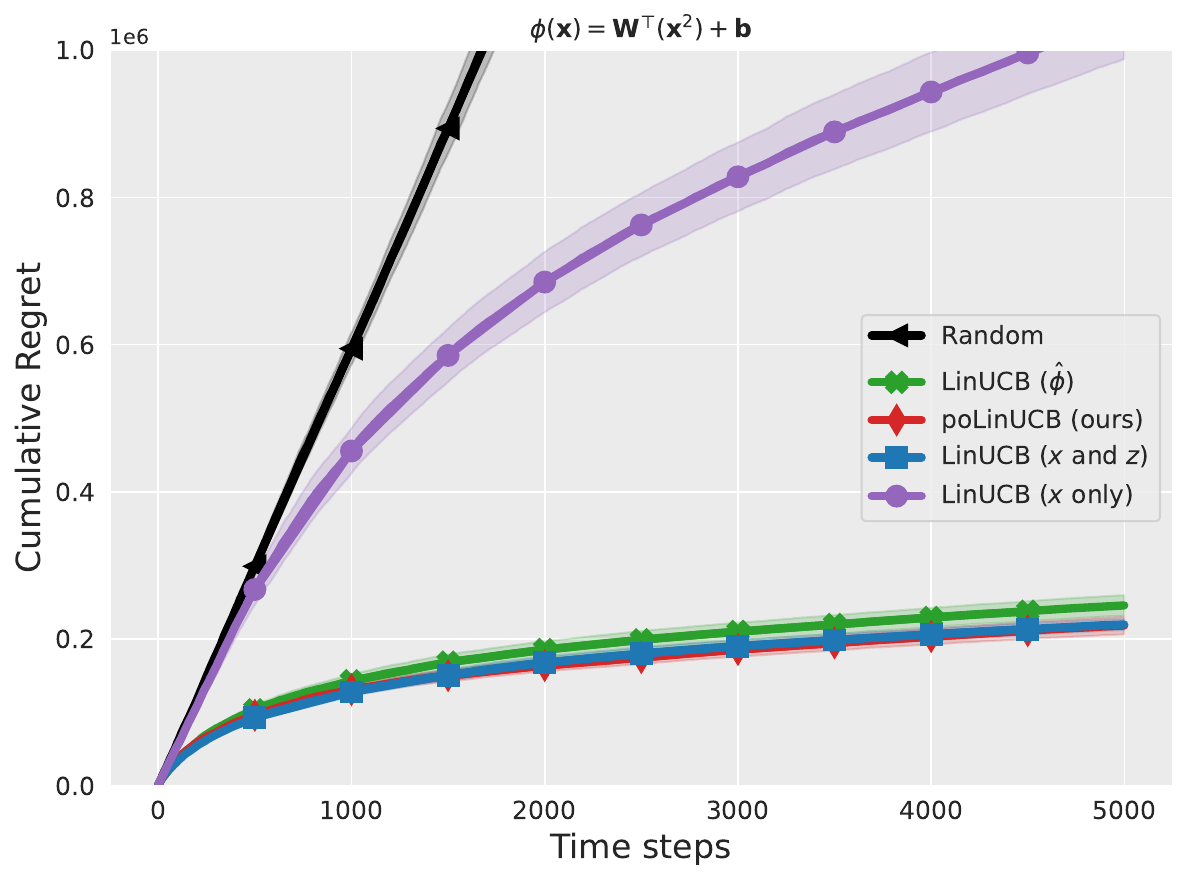}
    \includegraphics[width=0.475\textwidth]{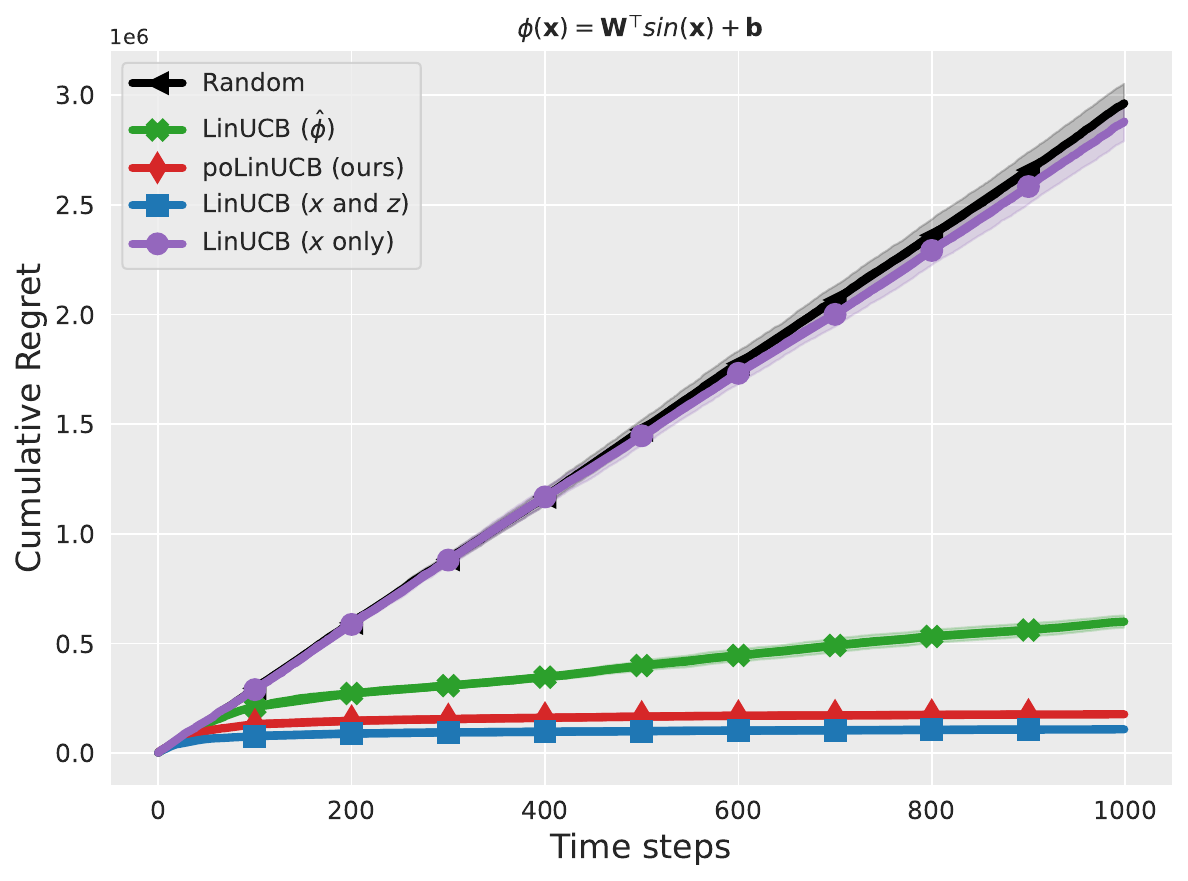}
    \vspace{-0.3cm}
    \caption{Comparison under the setup where the reward's dependency on the post-serving context is noiseless.}
    \label{fig:newcase}
    \vspace{-0.3cm}
\end{figure}

\subsection{Additional Experiments}\label{app:extra-experiments}
We further investigate the case where the true reward is $\langle \phi^\star(\vx),  \vbeta\rangle$. For this case, the reward's dependency on the post-serving context is noiseless. The results are presented in Figure~\ref{fig:newcase}. We observe that the algorithm adapted from \citet{wang2016learning} still performs worse than our method, though the gap seems become smaller. 

\subsection{Additional Discussions on Assumption \ref{assump:learnabilityOnPhiAdvNew}}\label{app:extra-disucssions}
Our regret analysis can accommodate different values of $\alpha$ in Assumption 1. The rate of $1/\sqrt{T}$ (when $\alpha=0.5$) is a commonly observed rate for many classical machine learning algorithms, including linear regression, logistic regression, and SVM with a linear kernel. This rate is rooted in the law of large numbers and the central limit theorem. For smaller $\alpha$ values, the generalization error will converge more slowly than $1/\sqrt{T}$, indicating that the learning problem becomes increasingly difficult. In the extreme case when $\alpha = 0$,  $\phi^\star$ function cannot be learned accurately, thus we will inevitably suffer a linear regret (simply due to model misspecification). 

For standard linear functions, $\alpha = 0.5$ and our regret bound in such situations is tight w.r.t. $\alpha$. However, we intentionally made assumption 1 to be more general in order to accommodate other much more complex ways of estimating the $\phi$ functions such as manifold regression (see, e.g., \citet{yang2016bayesian}), non-parametric ways like k-nearest-neighbors to estimate phi, or to estimate complex non-smooth function (e.g., functions in Holder spaces). For example, when $\phi$ is a function in Holder space $H(\beta)$, then the learning rate is $T^{-2\beta/(2\beta + d)}$, which is generally slower than $T^{-0.5}$ and depends on $\beta$ as well as the data dimension $d$ (see, e.g., the note of \citet{Tibshirani2017nonparametric}). A visual illustration can be found in Figure~\ref{fig:visual-illustraion-holder}.

\end{appendix}

\end{document}